\DeclareMathOperator*{\argmax}{arg\,max}
\newcommand{\p}[1]{\left(#1\right)}
\newcommand\numberthis{\addtocounter{equation}{1}\tag{\theequation}}
\newcommand{\row}[2]{\tilde #1_{#2}}
\newcommand{\pH}[2]{\left.\frac{\partial H}{\partial #1}\right|_{#2}}
\def\R{\mathbb{R}}
\def\figscale{1}
\def\zeros{\mathbf{0}}
\def\ones{\mathbf{1}}
\def\tran{^\top}
\def\given{\;|\;}
\def\mc{\mathcal}
\def\sproof{\mc L}
\def\xz{x^*}
\def\vl{\mathbf{\lambda}}
\def\S{\mc S}
\newtheorem{theorem}{Theorem}
\newtheorem*{theorem*}{Theorem} 
\newtheorem{lemma}[theorem]{Lemma}
\newtheorem{corollary}[theorem]{Corollary}
\theoremstyle{definition}
\newtheorem{definition}[theorem]{Definition}
\begin{document}
\title{How Do Classifiers Induce Agents to Invest Effort Strategically?}

\author{
  Jon Kleinberg \\
  Cornell University\\
  \texttt{kleinber@cs.cornell.edu} \\
  \and
  Manish Raghavan \\
  Cornell University\\
  \texttt{manish@cs.cornell.edu}
}
\begin{titlingpage}
\maketitle

\begin{abstract}
  Algorithms are often used to produce decision-making rules that classify
  or evaluate individuals. 
  When these individuals have incentives to be classified a
  certain way, they may behave strategically to influence their outcomes. We
  develop a model for how strategic agents can invest effort in order to change
  the outcomes they receive, and we give a tight characterization of when such
  agents can be incentivized to invest specified forms of effort into improving
  their outcomes as opposed to ``gaming'' the classifier. We show that whenever
  any ``reasonable'' mechanism can do so, a simple linear mechanism suffices.
\end{abstract}
\end{titlingpage}

\section{Introduction}

One of the fundamental insights in the economics of information is the
way in which assessing people (students, job applicants, employees)
can serve two purposes simultaneously: it can identify the strongest
performers, and it can also motivate people to invest effort in
improving their performance \cite{spence1973signaling}.
This principle has only grown in importance with the rise in
algorithmic methods for predicting individual performance
across a wide range of domains, including education, 
employment, and finance.

A key challenge is that we do not generally have access to the 
true underlying properties that we need for an assessment; rather,
they are encoded by an intermediate layer of {\em features}, so that
the true properties determine the features, and the features then
determine our assessment.
Standardized testing in education is a canonical example, in which a test
score serves as a proxy feature for a student's level of learning,
mastery of material, and perhaps other properties 
we are seeking to evaluate as well.
In this case, as in many others, the quantity we wish to measure is
unobservable, or at the very least, difficult to accurately measure; the
observed feature is a construct interposed between the decision rule and the
intended quantity.

This role that features play, as a kind of necessary interface between
the underlying attributes and the decisions that depend on them,
leads to a number of challenges.
In particular, when an individual invests effort to perform better 
on a measure designed by an evaluator,
there is a basic tension between effort invested to
raise the true underlying attributes that the evaluator cares about, and effort
that may serve to improve the proxy features without actually 
improving the underlying attributes.
This tension appears in many contexts --- it is the problem of {\em gaming}
the evaluation rule, and 
it underlies the formulation of {\em Goodhart's Law},
widely known in the economics literature, which
states that once a proxy measure becomes a goal in itself,
it is no longer a useful measure \cite{hardt2016strategic}.
This principle also underpins concerns about strategic gaming 
of evaluations in search engine rankings~\cite{davis2006search}, credit
scoring~\cite{bambauer2018algorithm,foust2008credit}, academic paper
visibility~\cite{beel2009academic}, reputation management~\cite{zarsky2007law},
and many other domains.

\paragraph*{Incentivizing a designated effort investment.} 
These considerations are at the heart of the following class of
design problems, illustrated schematically in 
Figure \ref{fig:graphic}.
An {\em evaluator} creates a decision rule for assessing an {\em agent}
in terms of a set of features, and this leads the agent to make
choices about how to invest effort across their actions
to improve these features.
In many settings, the evaluator views some forms of agent effort
as valuable and others as wasteful or undesirable.
For example, if the agent is a student and the evaluator is
constructing a standardized test, then the evaluator would likely
view it as a good outcome if the existence of the test causes the
student to study and learn the material, but a bad outcome if the
existence of the test causes the student to spend a huge amount of
effort learning idiosyncratic test-taking heuristics specific to the format of 
the test, or to spend effort on cheating.
Similarly, a job applicant (the agent) could prepare for a job interview
given by a potential employer (the evaluator)
either by preparing for and learning material that would directly
improve their job performance (a good outcome for both the agent
and the evaluator), or by superficially memorizing answers to questions
that they find on-line (a less desirable outcome).

Thus, to view an agent's effort in improving their features as
necessarily a form of ``gaming'' is to miss an important subtlety:
some forms of effort correspond intuitively to gaming, while others correspond
to self-improvement.  
If we think of the evaluator as having an opinion on which forms
of agent effort they would like to promote, then 
from the evaluator's point of view,
some decision rules work better than others in creating 
appropriate incentives:
they would like to 
create a decision rule whose incentives lead the agent to invest
in forms of effort that the evaluator considers valuable.

These concerns have long been discussed in the education literature surrounding
the issue of high-stakes standardized testing. In his book ``Measuring Up,''
Daniel Koretz writes,
\begin{quote}
  Test preparation has been the focus of intense argument for many years, and
  all sorts of different terms have been used to describe both good and bad
  forms\dots I think it's best to\dots distinguish between seven different types
  of test preparation: Working more effectively; Teaching more; Working harder;
  Reallocation; Alignment; Coaching; Cheating. The first three are what
  proponents of high-stakes testing want to see~\cite{koretz2008measuring}.
\end{quote}
Because teachers are evaluated based on their students' performance on a test,
they change their behavior in order to improve their outcomes. As Koretz notes,
this can incentivize the investment of both productive and unproductive forms of
effort.

\begin{figure}[t]
  \centering
  \begin{tikzpicture}
    \node[draw,text width=2.5cm,align=center,minimum height=2cm] (eff) at (-5.5, -2)
    {Agent's effort investment \includegraphics[width=2cm]{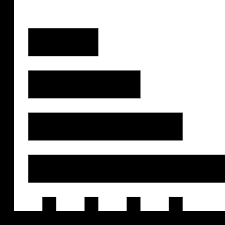}};
    \node[draw,minimum height=2cm,align=left] (feat) at (-2, -2)
    {Features\\\texttt{$F_1$ = ...}\\\texttt{$F_2$ = ...}\\\texttt{$F_3$ = ...}\\\texttt{\dots}};
    \node[draw,text width=2.5cm,align=center,minimum height=2cm] (dec) at (1.5, -2)
    {Evaluator's decision rule \\\includegraphics[width=2.5cm]{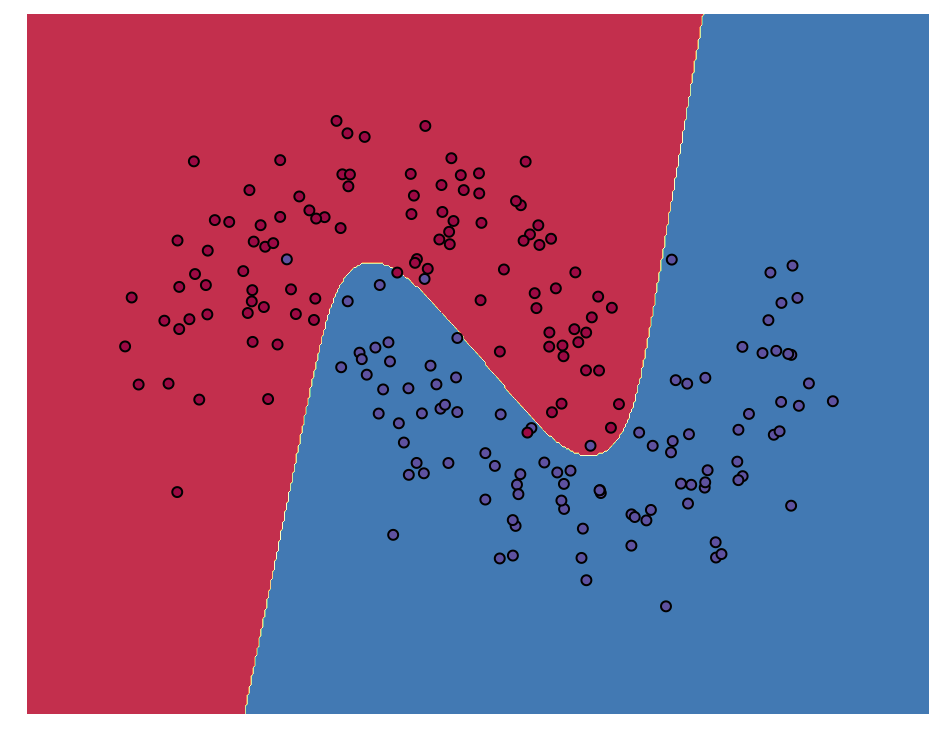}};
    \node (out) at (4.5, -2) {Outcome};
    \draw[->, very thick] (eff) -- (feat);
    \draw[->, very thick] (feat) -- (dec);
    \draw[->, very thick] (dec) -- (out);
  \end{tikzpicture}
  \caption{The basic framework: an agent chooses how
to invest effort to improve the values of certain features, and
an evaluator chooses a decision rule that creates indirect
incentives favoring certain investments of effort over others.
  \label{fig:graphic}
}
\end{figure}

What are the design principles that could help in creating
a decision that incentives the kinds of effort that the
evaluator wants to promote?
Keeping the evaluation rule and the features secret, so as to make
them harder to game, is generally not viewed as a robust solution,
since information about the evaluation process
tends to leak out simply by observing the decisions being made, 
and secrecy can create inequalities between insiders who know
how the system works and outsiders who don't.
Nor should the goal be simply to create a decision rule
that cannot be affected at all by an agent's behavior;
while this eliminates the risk of gaming, it also
eliminates the opportunity for the decision rule to
incentivize behavior that the evaluator views as valuable.

If there were no intermediate features, and the evaluator 
could completely observe an agent's choices about how they
spent their effort across different actions, then the evaluator
could simply reward exactly the actions they want to incentivize.
But when the actions taken by an individual are
hidden, and can be perceived only through an intermediate
layer of proxy features, then the
evaluator cannot necessarily tell whether these features are the result of
effort they intended to promote 
(improving the underlying attribute that the feature
is intended to measure) or effort from other actions that also
affect the feature.
In the presence of these constraints, can one design
evaluation rules that nonetheless incentivize the intended set of behaviors?

To return to our stylized example involving students as agents
and teachers as evaluators, a teacher can choose among many possible
grading schemes to announce to their class; each corresponds to a candidate
decision rule, and each could potentially incentivize different forms
of effort on the part of the students.
For example, the teacher could announce that a certain percentage of 
the total course grade depends on homework scores, and 
the remaining percentage depends on exam scores.
In this context, the homework and the exam scores are
the features that the teacher is able to observe, and the 
students have various actions at their disposal --- studying to
learn material, cheating, or other strategies --- that can improve
these feature values.
How does the way in which the teacher balances the percentage weights
on the different forms of coursework --- producing different
possible decision rules --- affect the decisions students
make about effort?
As we will see in the next section, the model we develop here suggests
some delicate ways in which choices about a decision rule
can in principle have significant effects on agents' decisions about effort.

These effects are not unique to the classroom setting. To take an
example from a very different domain, consider a restaurant trying to improve
its visibility on a review-based platform (e.g. Yelp). 
Here we can think of the platform as the evaluator constructing a decision
rule and the restaurant as the agent: the platform
determines a restaurant's rank based on both the quality of reviews
and the number of users who physically visit it, both of which are
meant to serve as proxies for its overall quality. The restaurant
can individually game either of these metrics by paying people to
write positive reviews or to physically check in to their location, but
improving the quality of the restaurant will ultimately improve both
simultaneously. Thus, the platform may wish to consider both metrics,
rating and popularity, in some balanced way in order to increase the
restaurant's incentive to improve.

\paragraph*{The present work: Designing evaluation rules.}
In this paper, we develop a model for this process of incentivizing effort,
when actions can only be measured through intermediate features.
We cast our model as an interaction between an {\em evaluator}
who is performing
an assessment, and an {\em agent} who wants to score well on this assessment.
An instance of the problem
consists of a set of actions in which the agent can invest
chosen amounts of {\em effort}, and a set of functions determining how
the levels of effort spent on these actions
translate into the values of {\em features} that are observable to
the evaluator.

The evaluator's design task is to create an evaluation rule that takes
the feature values as input, and produces a numerical score as output.
(Crucially, the evaluation rule is not a function of the agent's level of
effort in the actions, only of the feature values.)
The agent's goal is to achieve a high score, and to do this,
they will optimize how they allocate their effort across actions.
The evaluator's goal is to induce a specific {\em effort profile}
from the agent --- specifying a level of effort devoted to each action ---
and the evaluator seeks an evaluation rule that
causes the agent to decide on this effort profile. 
Again, Figure~\ref{fig:graphic} gives a basic view of this pipeline
of activities.

Our main result is a characterization of the instances for which 
the evaluator can create an evaluation rule inducing a specified
effort profile, and a polynomial-time algorithm to construct such a rule
when it is feasible.
As part of our characterization, we find that if there is any 
evaluation rule, monotone in the feature values, that induces the
intended effort profile, then in fact there is one that is linear in
the feature values; and we show how to compute a set of coefficients 
achieving such a rule.
Additionally, we provide a tight characterization of which actions can be
jointly incentivized. 

The crux of our characterization is to consider how an agent is able
to ``convert'' effort from one action to another, or more generally
from one set of actions to another set of actions.
If it is possible to reallocate effort spent on actions the evaluator
is trying to incentivize to actions the evaluator isn't trying to 
incentivize, in a way that improves the agent's feature values, then
it is relatively easy to see that the evaluator won't be able to 
design a decision rule that incentivizes their desired effort profile:
any incentives toward the evaluator's desired effort profile will be undercut
by the fact that this effort can be converted away into other undesired
forms of effort in a way that improves the agent's outcome.
The heart of the result is the converse, providing an if-and-only-if
characterization: when such a conversion by
the agent isn't possible, then we can use the absence of this conversion
to construct an explicit decision rule that incentivizes precisely
the effort profile that the evaluator is seeking.

Building on our main result, we consider a set of further questions as well.
In particular, we discuss characterizations of the set of all
linear evaluation rules that can incentivize a family of allowed
effort profiles,
identifying tractable structure for this set in special cases, but
greater complexity in general.
And we consider the problem of choosing an evaluation rule to 
optimize over a given set of effort profiles,
again identifying tractable special cases and
computational hardness in general.

\paragraph*{Further Related Work.}

Our work is most closely related to the principal-agent literature from
economics: an evaluator (the principal) wants to set a policy (the evaluation
rule) that accounts for the agent's strategic responses. Our main result has
some similarities, as well as some key differences, relative to a classical
economic formulation in principal-agent models
\cite{grossman1983analysis,holmstrom1987aggregation,holmstrom1991multitask,hermalin1991moral}.
We explore this connection in further detail in Section~\ref{sec:pa}.

In the computer science literature, a growing body of work seeks to characterize
the interaction between a decision-making rule and the strategic agents it
governs. This was initially formulated as a zero-sum
game~\cite{dalvi2004adversarial}, e.g. in the case of spam detection, and more
recently in terms of Stackelberg competitions, in which the evaluator publishes
a rule and the agent may respond by manipulating their features
strategically~\cite{hardt2016strategic,bruckner2011stackelberg,dong2018strategic,hu2018disparate,milli2018social}.
This body of work is different from our approach in a crucial respect, in that
it tends to assume that all forms of strategic effort from the agent are
undesirable; in our model, on the other hand, we assume that there are certain
behaviors that the evaluator wants to incentivize.

There is also work on strategyproof linear regression 
\cite{chenstrategyproof,cummings2015truthful,dekel2010incentive}.
The setup of these models is also quite different from ours -- typically, the
strategic agents submit $(x, y)$ pairs where $x$ is fixed and $y$ can be chosen
strategically, and the evaluator's goal is to perform linear regression in a way
that incentivizes truthful reporting of $y$. In our setting, on the other hand,
agents strategically generate their features $x$, and the evaluator rewards them
in some way based on those features.

Work exploring other aspects of how evaluation rules lead to investment of
effort can be found in the economics literature, particularly in the contexts of
hiring~\cite{fryer2013valuing,hu2017short} and affirmative
action~\cite{coate1993will}. While these models tend to focus on decisions
regarding skill acquisition, they broadly consider the investment incentives
created by evaluation. Similar ideas can also be found in the Science and
Technology Studies literature~\cite{ziewitz2018gaming}, considering how
organizations respond to guidelines and regulations.

As noted above, 
principal-agent mechanism design problems in which 
the principal cannot directly
observe the agent's actions have been studied in the economics
literature~\cite{arrow1963uncertainty,pauly1968economics,arrow1968economics},
and include work on the notion of {\em moral hazard.} Insurance markets are
canonical examples in this domain: the agent reduces their liability by
purchasing insurance, and this may lead them to act more recklessly and decrease
welfare. The principal cannot directly observe how carefully the agent is
acting, only whether the agent makes any insurance claims. These models provide
some inspiration for ours; in particular, they are often formalized such that the
agent's actions are ``effort variables'' which, at some cost to the agent,
increase the agent's level of ``production''~\cite{laffont2009theory}. This
could be, for example, acting in more healthy ways or driving more carefully in
the cases of health and car insurance respectively.
Note, however, that in the insurance case, the
agent and the principal have aligned incentives in that both prefer that the
agent doesn't --- e.g., in the case of car insurance --- get
into an accident. In our model,
we make no such assumptions: the agent may have no incentive at all to invest in
the evaluator's intended
forms of effort beyond the utility derived from the mechanism.
The types of scenarios considered in 
insurance markets can be generalized to domains like
share-cropping~\cite{cheung1969theory,stiglitz1974incentives}, corporate
liability~\cite{jensen1976theory}, and theories of
agency~\cite{ross1973economic}. Steven Kerr provides a detailed list of such
instances in his classic paper ``On the folly of rewarding A, while hoping for
B''~\cite{kerr1975folly}.

Concerns over strategic behavior also manifest
in ways that do not necessarily map to
intuitive notions of gaming, but instead where the evaluator does not want to
incentivize the agent to take actions that might be counter to their 
interests.  
For example, Virginia Eubanks considers a case of risk assessment
in the child welfare system; when a risk tool includes features about
a family's history of interaction with public services, 
including aid such as food stamps and public housing,
she argues that it has the potential to incentivize families 
to avoid such services for fear of being labeled high risk 
\cite{Eubanks2018}.
This too would be a case in which
the structure and implementation of an evaluation rule can
incentivize potentially undesirable actions in agents, and would be interesting
to formalize in the language of our model.

\paragraph{Organization of the remainder of the paper.}
Section~\ref{sec:overview} contains all the definitions and technical motivation
leading up to the formulation and statement of our two main results,
Theorems~\ref{thm:main} and~\ref{thm:opt}. Sections~\ref{sec:incent}
and~\ref{sec:opt} contain the proofs of these two results, respectively, and
Section~\ref{sec:lin_space} considers further extensions.

\section{Model and Overview of Results} \label{sec:overview}
\subsection{A Formal Model of Effort Investment}

Here, we develop a formal model of an agent's investment of effort.
There are $m$ actions the agent can take, and they must decide to allocate an
amount of effort $x_j$ to each activity $j$.  We'll assume the agent has some
budget $B$ of effort to invest,\footnote{We might instead model the agent as
  incurring a fixed cost $c$ per unit effort with no budget. In fact, this
  formulation is in a sense equivalent: for every cost $c$, there exists a
  budget $B$ such that an agent with cost $c$ behaves identically to an agent
with fixed budget $B$ (and no cost). For clarity, we will deal only with the
budgeted case, but our results will extend to the case where effort comes at a
cost.} so $\sum_{j=1}^m x_j \leq B$, and we'll call this investment of effort $x
= (x_1, x_2, \ldots, x_m)$ an \textit{effort profile}.

The evaluator cannot directly observe the agent's effort profile, but instead
observes features $F_1, \dots, F_n$ derived from the agent's effort
profile. The value of each $F_i$ grows monotonically in the 
effort the agent invests in certain actions according to an 
\textit{effort conversion function} $f_i(\cdot)$:
\begin{equation}
  F_i = f_i\p{\sum_{j=1}^m \alpha_{ji} x_j},
  \label{eq:Fi_def}
\end{equation}
where each  $f_i(\cdot)$ is nonnegative,
smooth, weakly concave (i.e., actions provide diminishing returns), and strictly
increasing. We assume $\alpha_{ji} \ge 0$.

We represent these parameters of the problem using a bipartite graph with the
actions $x_1, x_2, \ldots, x_m$ on the left, the features $F_1, \ldots, F_n$ on
the right, and an edge of weight $\alpha_{ji}$ whenever $\alpha_{ji} > 0$, so
that effort on action $x_j$ contributes to the value of feature $F_i$. We
call this graph, along with the associated parameters (the matrix $\alpha \in
\R^{m \times n}$ with entries $\alpha_{j i}$; functions $f_i: \R \to \R$ for $i
\in \{1, ..., n\}$; and a budget $B$), the \emph{effort graph} $G$.
Figure~\ref{fig:graph_model} shows some examples of what $G$ might look like.

\def\figwdth{0.4}

\begin{figure}[ht]
  \centering
  \begin{subfigure}{\figwdth\textwidth}
    \centering
    \begin{tikzpicture}[scale=\figscale]
      \node[circle, draw] (x1) at (-1, -.75) {$x_1$};
      \node (d1) at (-1, -1.5) {\vdots};
      \node[circle, draw] (x2) at (-1, -2.5) {$x_j$};
      \node (d2) at (-1, -3.25) {\vdots};
      \node[circle, draw] (x4) at (-1, -4.25) {$x_m$};

      \node[circle, draw] (F1) at (2, -.75) {$F_1$};
      \node[circle, draw] (F2) at (2, -2.5) {$F_2$};
      \node (df) at (2, -3.25) {\vdots};
      \node[circle, draw] (Fn) at (2, -4.25) {$F_n$};

      \node[circle, draw] (H) at (4, -2.5) {$H$};

      \draw[->,very thick] (x1) edge [above] node {$\alpha_{11}$} (F1);
      \draw[->,very thick] (x2) edge [above] node {$\alpha_{j1}$} (F1);
      \draw[->,very thick] (x2) edge [above] node {$\alpha_{j2}$} (F2);
      \draw[->,very thick] (x4) edge [above left] node {$\alpha_{m2}$} (F2);
      \draw[->,very thick] (x4) edge [below] node {$\alpha_{mn}$} (Fn);

      \draw[->,very thick] (F1) edge [above] node {} (H);
      \draw[->,very thick] (F2) edge [above] node {} (H);
      \draw[->,very thick] (Fn) edge [above] node {} (H);
    \end{tikzpicture}
    \caption{General model}
    \label{fig:gen_model}
  \end{subfigure}
  \hspace*{0.1\textwidth}
  \begin{subfigure}{\figwdth\textwidth}
    \centering
    \begin{tikzpicture}[scale=\figscale]
      \node[circle, draw, red] (x1) at (-1, 0.5) {$x_1$};
      \node[circle, draw] (x2) at (-1, -1) {$x_2$};
      \node[circle, draw, red] (x3) at (-1, -2.5) {$x_3$};

      \node[circle, draw] (F1) at (2, 0) {$F_T$};
      \node[circle, draw] (F2) at (2, -2) {$F_W$};

      \node[circle, draw] (H) at (4, -1) {$H$};

      \draw[->,very thick] (x2) edge [below] node {$\alpha_{2T}$} (F1);
      \draw[->,very thick] (x2) edge [above] node {$\alpha_{2W}$} (F2);
      \draw[->,very thick] (x1) edge [above] node {$\alpha_{1T}$} (F1);
      \draw[->,very thick] (x3) edge [below] node {$\alpha_{3W}$} (F2);

      \draw[->,very thick] (F1) edge [above] node {$\beta_T$} (H);
      \draw[->,very thick] (F2) edge [below] node {$\beta_W$} (H);
    \end{tikzpicture}
    \caption{The classroom setting}
    \label{fig:TH}
  \end{subfigure}
  \caption{The conversion of effort to feature values can be represented
using a weighted bipartite graph, where effort $x_j$ spent on action $j$
has an edge of weight $\alpha_{ji}$ to feature $F_i$.
  \label{fig:graph_model}
}
\end{figure}
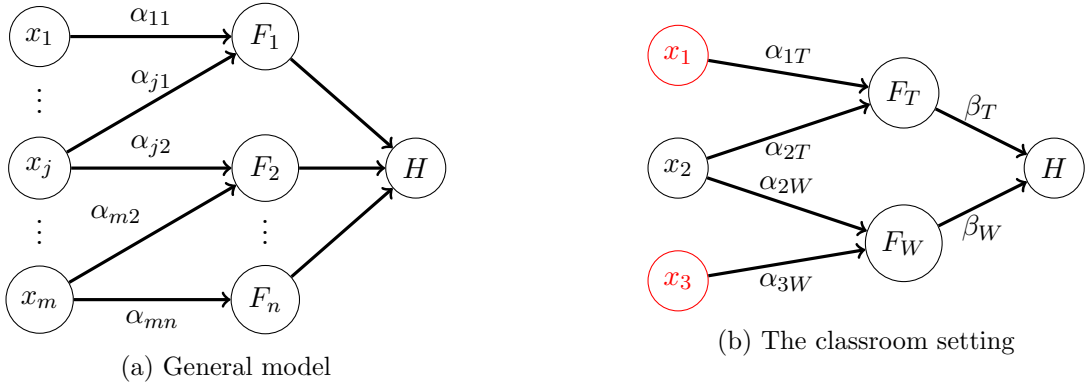

The evaluator combines the features generated by the effort using some mechanism
$M$ to produce an output $H$, which is the agent's utility. 
$M$ is simply a function of the $n$ feature values.
In a classification
setting, for example, $H$ may be binary (whether or not the agent is classified
as positive or negative), or a continuous value (the probability that the agent
receives a positive outcome). Because all features are increasing in the amount
of effort invested by the agent --- in particular, including the
kinds of effort we want to incentivize --- we'll restrict our attention to the
class of monotone mechanisms, meaning that if agent $X$ has larger values in all
features than agent $Y$, then $X$'s outcome should be at least as good as that
of $Y$. Formally, we write this as follows:
\begin{definition}
  A \emph{monotone mechanism} $M$ on features $F_i$ is a mapping $\R^n \to \R$
  such that for $F, F' \in \R^n$ with $F_i' \ge F_i$ for all $i \in \{1, ..., n\}$,
  $M(F') \ge M(F)$. Also, for any $F$, there exists $i \in \{1, ..., n\}$ such that
  strictly increasing $F_i$ strictly increases $M(F)$.
  \label{def:monotone}
\end{definition}
The second of these conditions implies that it is strictly optimal for an agent
to invest all of its budget.
The agent's utility is simply its outcome $H$. Thus, for a mechanism $M$, 
the agent's optimal strategy is to invest effort to maximize
$M(F)$ subject to the constraints that $\sum_{j=1}^m x_j \leq B$ and
$x_j \geq 0$ for all $j$.  (Recall that in this phrasing,
the vector $F$ of feature values is determined from the effort value $x_i$
via the functions $F_i = f_i\p{\sum_{j=1}^m \alpha_{ji} x_j}.$)
We can write the agent's search for an 
optimal strategy succinctly as the following optimization problem:
\begin{align*}
  x^* = \argmax_{x \in \R^m} ~& M(F) \numberthis \label{eq:gen_invest}
  &\text{s.t.} ~ & \sum_{j=1}^m x_j \le B \\
  &&&x \ge \zeros
\end{align*}
where each component $F_i$ of $F$ is defined as in~\eqref{eq:Fi_def}. Throughout
this paper, we'll assume that agents behave rationally and optimally, though it
would be an interesting subject for future work to consider extensions of this
model where agents suffer from behavioral biases. We also note that this is
where we make use of the concavity of the functions $f_i$, since for arbitrary
$f_i$ the agent wouldn't necessarily be able to efficiently solve this
optimization problem.

\subsection{Returning to the classroom example}
To illustrate the use of this
model, consider the effort graph shown in Figure~\ref{fig:TH}, encoding the classroom
example described in the introduction. 
There are two pieces of graded work for the class (a test $F_T$ and
homework $F_W$), and the student can study the material ($x_2$) 
to improve their scores 
on both of these. They can also cheat on the test ($x_1$) and 
look up homework answers on-line ($x_3$).
Their combined effort $\alpha_{1T} x_1 + \alpha_{2T} x_2$ contributes
to their score on the test, and their combined effort
$\alpha_{2W} x_2 + \alpha_{3W} x_3$ contributes to their score on the homework.
To fully specify the effort
graph, we would have to provide a budget $B$ and effort conversion functions
$f_T$ and $f_W$; we leave these uninstantiated, as our main conclusions from
this example will not depend on them.

From these scores, the teacher must decide on a student's final grade $H$. For
simplicity, we'll assume the grading scheme is simply a linear combination,
meaning $H = \beta_T F_T + \beta_W F_W$ for some real numbers $\beta_T, \beta_W
\ge 0$.

The teacher's objective is to incentivize the student to learn the material;
thus, they want to set $\beta_T$ and $\beta_W$ such that the student invests
their entire budget into $x_2$. Of course, this may not be possible. For
example, if $\alpha_{1T}$ and $\alpha_{3W}$ are significantly larger than
$\alpha_{2T}$ and $\alpha_{2W}$ respectively, so that it is much easier to cheat
on the test and copy homework answers than to study, the student would maximize
their utility by investing all of their effort into these undesirable
activities.

In fact, we can make this precise as follows. For any unit of effort invested in
$x_2$, the student could instead invest $\frac{\alpha_{2T}}{\alpha_{1T}}$ and
$\frac{\alpha_{2W}}{\alpha_{3W}}$ units of effort into $x_1$ and $x_3$
respectively without changing the values of $F_T$ and $F_W$. Moreover, if
$\frac{\alpha_{2T}}{\alpha_{1T}} + \frac{\alpha_{2W}}{\alpha_{3W}} < 1$, then
this substitution strictly reduces the sum $x_1 + x_2 + x_3$, leaving
additional effort available (relative to the budget constraint) for raising
the values of $F_T$ and $F_W$.
It follows that in any solution with $x_2 > 0$, there is a way to
strictly improve it through this substitution.
Thus, under this condition, the
teacher cannot incentivize the student to only study.
This is precisely the type of ``conversion'' of effort that we
discussed briefly in the previous section, from 
the evaluator's preferred form of effort ($x_2$) to other forms
($x_1$ and $x_3$)

When $\frac{\alpha_{2T}}{\alpha_{1T}} + \frac{\alpha_{2W}}{\alpha_{3W}} \ge 1$,
on the other hand, a consequence of our results is that that no matter what
$f_T$, $f_W$ and $B$ are, there exist some $\beta_T, \beta_W$ that the teacher
can choose to incentivize the student to invest all their effort into studying.
This may be somewhat surprising -- for instance, consider the case where
$\alpha_{1T} = \alpha_{3W} = 3$ and $\alpha_{2T} = \alpha_{2W} = 2$, meaning
that the best way for the student to maximize their score on each piece of
graded work individually is to invest undesirable effort instead of studying.
Even so, it turns out that the student can still be incentivized to put all of
their effort into studying by appropriately balancing the weight placed on the
two pieces of graded work.

This example illustrates several points that will be useful in what follows.
First, it makes concrete the 
basic obstacle against incentivizing a particular
form of effort: the possibility that it can be
``swapped out'' at a favorable exchange rate for other kinds of effort.
Second, it shows a particular kind of reason why
it might be possible to incentivize a designated form of effort $x_i$:
if investing in $x_i$ improves multiple features simultaneously, the agent might
select it even if it is not the most efficient way to
increase any one feature individually.
This notion of activities that ``transfer'' across different forms of
evaluation, versus activities that fail to transfer, 
arises in the education literature on testing
\cite{koretz1991testing}, and our model shows how such effects
can lead to valuable incentives.

\subsection{Stating the main results} 
In our example, it turned out that a linear grading scheme was sufficient for
the teacher to incentivize the student to study. We formalize such mechanisms as
follows.
\begin{definition}
  A \emph{linear mechanism} $M: \R^n \to \R$ is the mapping $M(F) = \beta\tran
  F = \sum_{i=1}^n \beta_i F_i$ for some $\beta \in \R^n$ such that $\beta_i \ge 0$ for all $i \in \{1, ..., n\}$ and
  $\sum_{i=1}^n \beta_i > 0$.
  \label{def:linear}
\end{definition}
Note that we don't require $\sum_{i=1}^n \beta_i$ to be anything in particular;
the agent's optimal behavior is invariant to scaling $\beta$, so we can
normalize $\beta$ to sum to any intended quantity without affecting the
properties of the mechanism. We rule out the mechanism in which 
all $\beta_i$ are equal to $0$, as it is not a monotone mechanism.

We will say that a mechanism $M$ \textit{incentivizes} effort profile $x$ if $x$
is an optimal response to $M$.
Ultimately, our main result will be to prove the following theorem,
characterizing when a given effort profile can be
incentivized. First, we need to define the support of $x$ as
\begin{equation}
  \S(x) \triangleq \{j \given x_j > 0 \}.
  \label{eq:supp_def}
\end{equation}
With this, we can state the theorem.
\begin{theorem}
  For an effort graph $G$ and an effort profile $x^*$, the following are
  equivalent:
  \begin{enumerate}[itemsep=0ex, topsep=0pt]
    \item There exists a linear mechanism that incentivizes $x^*$.
    \item There exists a monotone mechanism that incentivizes $x^*$.
    \item For all $x$ such that $\S(x) \subseteq \S(x^*)$, there exists a linear
      mechanism that incentivizes $x$. 
  \end{enumerate}
  Furthermore, there is a polynomial time algorithm that decides the
  incentivizability of $x^*$ and provides a linear mechanism $\beta$ to
  incentivize $x^*$ whenever such $\beta$ exists.
  \label{thm:main}
\end{theorem}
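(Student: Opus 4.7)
The plan is to reduce the characterization to a linear feasibility problem and then apply LP duality, interpreting the dual certificate as an explicit effort-conversion obstruction of the kind previewed in the introduction. Three of the six implications are immediate. A linear mechanism is monotone in the sense of Definition~\ref{def:monotone}, giving $(1)\Rightarrow(2)$, and setting $x = x^*$ in statement~$(3)$ gives $(3)\Rightarrow(1)$. For $(1)\Rightarrow(3)$, I will rely on the KKT characterization established in the next paragraph: whether a linear $\beta$ incentivizes a profile $x$ reduces to linear conditions indexed by $\S(x)$ and twisted by the derivatives $f_i'$ at $x$. Given a $\beta$ that works for $x^*$, I rescale coordinatewise by $f_i'(\sum_j \alpha_{ji} x^*_j) / f_i'(\sum_j \alpha_{ji} x_j)$ to obtain a $\beta'$ whose conditions at $x$ are a subset of those of $\beta$ at $x^*$ (since $\S(x)\subseteq \S(x^*)$); nonnegativity and nontriviality are preserved because $f_i' > 0$.

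The heart of the proof is $(2)\Rightarrow(1)$. For any linear $\beta\ge 0$, the agent's objective $\sum_i \beta_i f_i(\sum_j \alpha_{ji} x_j)$ is concave in $x$, so KKT at $x^*$ is both necessary and sufficient for $x^*$ to be an optimal response. Writing $c_i = f_i'(\sum_j \alpha_{ji} x^*_j) > 0$ and $\gamma_i = \beta_i c_i$, and normalizing the budget multiplier to $1$, the existence of a linear mechanism incentivizing $x^*$ is equivalent to the feasibility of
\[
\gamma \ge 0, \qquad \sum_i \gamma_i \alpha_{ji} = 1 \text{ for } j \in \S(x^*), \qquad \sum_i \gamma_i \alpha_{ji} \le 1 \text{ for } j \notin \S(x^*).
\]
Any feasible $\gamma$ yields $\beta_i = \gamma_i/c_i$; nontriviality of $\beta$ follows because the equality constraints force $\gamma \ne 0$ whenever $\S(x^*)$ is nonempty (which it is at a budget-exhausting optimum).

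Suppose this LP is infeasible. By Farkas' Lemma one obtains coefficients $\nu_j\in\R$ for $j\in \S(x^*)$ and $w_j\ge 0$ for $j\notin \S(x^*)$ satisfying $\sum_{j\in\S(x^*)} \nu_j \alpha_{ji} \le \sum_{j\notin\S(x^*)} w_j \alpha_{ji}$ for every feature $i$, with $\sum_{j\in\S(x^*)} \nu_j > \sum_{j\notin\S(x^*)} w_j$. Translating this certificate into the primal, set $\Delta x_j = -\nu_j$ on $\S(x^*)$ and $\Delta x_j = w_j$ off $\S(x^*)$. For sufficiently small $\epsilon > 0$, the perturbation $x^* + \epsilon \Delta x$ is feasible (nonnegativity is preserved because $x^*_j > 0$ on $\S(x^*)$ and $\Delta x_j\ge 0$ off it), weakly improves every feature, and leaves a strict budget slack of order $\epsilon$. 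Investing this freed budget into an action feeding a feature whose strict increase raises $M^*$'s output — the existence of such a feature at the perturbed point follows from the strict clause of Definition~\ref{def:monotone} — yields a strict improvement over $M^*(F^*)$, contradicting the assumption that $M^*$ incentivizes $x^*$. Hence the LP is feasible and a linear mechanism exists.

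The algorithmic claim follows because the LP has $n$ variables and $O(m)$ linear constraints with data computable in polynomial time from the effort graph (the $c_i$ are evaluations of the given $f_i$), so any standard polynomial-time LP solver decides feasibility and returns $\gamma$ when one exists, from which $\beta = \gamma/c$ is read off. The most delicate step will be the Farkas translation together with the use of Definition~\ref{def:monotone}: turning a perturbation that only weakly improves features (but strictly slackens the budget) into a strictly $M^*$-improving move requires that the strict-increase-helpful feature flagged by Definition~\ref{def:monotone} at the perturbed point be reachable by some action; I expect to handle corner cases where it is not by restricting attention to features actually reached by the effort graph (which is without loss of generality, since unreachable features are constant under any effort profile).
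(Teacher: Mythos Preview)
Your proposal is correct and shares the paper's core mechanism: set up the KKT system for a linear incentive, pass to the dual, and interpret the dual certificate as an effort-substitution move that defeats any monotone mechanism. The paper organizes this slightly differently: it factors through an intermediate invariant $\kappa_S$ (the optimal value of a substitution LP in $\alpha$ alone), proves $\kappa_S<1$ gives the obstruction (its Lemma~\ref{lem:subst2}) and $\kappa_S=1$ gives a feasible $\beta$ (its Lemma~\ref{lem:L_S}), and then gets $(1)\Rightarrow(3)$ from the fact that $\kappa_S$ depends only on the support (and is monotone in $S$). You instead attack the KKT feasibility LP at $x^*$ directly with Farkas and obtain $(1)\Rightarrow(3)$ by the coordinatewise rescaling $\beta_i'=\beta_i\,f_i'([\alpha^\top x^*]_i)/f_i'([\alpha^\top x]_i)$. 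Your route is a bit more streamlined and avoids introducing $\kappa_S$; the paper's route buys the clean structural statement that incentivizability depends only on $\alpha$ and $\S(x^*)$, independent of $f_i$ and $B$, which it then reuses in later sections. Your caveat about the strict-increase feature possibly being unreachable is the same corner the paper implicitly assumes away (see the remark following Definition~\ref{def:monotone} and the last line of the proof of Lemma~\ref{lem:subst2}).
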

When there exists a monotone mechanism incentivizing $x^*$, we'll call both
$x^*$ and $\S(x^*)$ \textit{incentivizable}.\footnote{A closely related notion
in the principal-agent literature is that of an \textit{implementable} action.}
Informally, when $x^*$ is not incentivizable, this algorithm finds a succinct
``obstacle'' to any solution with support $\S(x^*)$, meaning no $x$ such that
$\S(x) = \S(x^*)$ is incentivizable. The following corollary is a direct
consequence of Theorem~\ref{thm:main}. (We use the notation $[m]$ to represent
$\{1, 2, \ldots, m\}$.)
\begin{corollary}
  For a set $S \subseteq [m]$, some $x$ such that $\S(x) = S$ is incentivizable
  if and only if all $x$ with $\S(x) = S$ are incentivizable.
  \label{cor:S_incent}
\end{corollary}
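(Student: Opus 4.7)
The plan is to deduce the corollary directly from Theorem~\ref{thm:main}, exploiting the asymmetry in how condition~(3) is phrased: it quantifies universally over all effort profiles $x$ whose support is contained in $\S(x^*)$, whereas (1) and (2) are about the single profile $x^*$. This asymmetry is exactly what forces the set-level equivalence in the corollary.

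First I would dispense with the trivial direction: if every $x$ with $\S(x) = S$ is incentivizable, then, provided $S$ corresponds to at least one feasible effort profile, some such $x$ is incentivizable. Feasibility is not an issue as long as $S \subseteq [m]$ is such that any positive vector supported on $S$ summing to at most $B$ exists, which is immediate (e.g., split $B$ evenly among the coordinates in $S$).

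For the substantive direction, I would suppose some $x^*$ with $\S(x^*) = S$ is incentivizable, meaning by definition that there is a monotone mechanism incentivizing $x^*$ (condition~(2) of Theorem~\ref{thm:main}). Invoking the theorem's equivalence (2)~$\Rightarrow$~(3), for every effort profile $x$ satisfying $\S(x) \subseteq \S(x^*) = S$ there exists a linear mechanism that incentivizes $x$. Now let $x'$ be any effort profile with $\S(x') = S$; then $\S(x') \subseteq \S(x^*)$, so $x'$ is incentivized by some linear mechanism, and a fortiori by a monotone one, making $x'$ incentivizable. Since $x'$ was arbitrary, every effort profile with support exactly $S$ is incentivizable.

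There is no real obstacle here, since the content of the corollary is already packed into the equivalence (2)~$\Leftrightarrow$~(3) of Theorem~\ref{thm:main}; the only minor point worth flagging in the write-up is that condition~(3) is stated with containment $\S(x) \subseteq \S(x^*)$ rather than equality, which is precisely the slack needed to move from one representative $x^*$ of support $S$ to all others.
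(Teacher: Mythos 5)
Your proof is correct and follows exactly the route the paper intends: the corollary is stated there as a direct consequence of Theorem~\ref{thm:main}, obtained by applying the implication (2)~$\Rightarrow$~(3) to a single incentivizable representative $x^*$ with support $S$ and then specializing the containment $\S(x) \subseteq \S(x^*)$ to equality. Your observation that a linear mechanism is in particular monotone, which closes the loop back to the definition of incentivizability, is the only detail the paper leaves implicit, and you handle it correctly.
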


In Section~\ref{sec:incent}, we'll prove Theorem~\ref{thm:main}. The proof we
give is constructive, and it establishes the algorithmic result.

\paragraph*{Optimizing over effort profiles.}
It may be the case that the evaluator doesn't have a single
specific effort profile by the agent that they want to incentivize; 
instead, they may have an objective function defined on effort profiles,
and they would like to maximize this objective function over
effort profiles that are incentivizable.
In other words, the goal is to choose an evaluation rule so that
the resulting effort profile it induces performs as well
as possible according to the objective function.

In Section~\ref{sec:opt}, we consider the following formulation for 
such optimization problems.
We assume that the evaluator wants to maximize a concave function
$g : \R^m \to \R$ over the space of effort profiles, subject to the constraint
that the agent only invests effort in a subset $D \subseteq [m]$ of effort
variables. 
To accomplish this, the evaluator selects an evaluation rule so as to
incentivize an effort profile $x^*$ with $g(x^*)$ as large as possible. This is
what we will mean by optimizing $g$ over the space of effort profiles.
In this setting, we show the following results, which we prove in
Section~\ref{sec:opt}.
\begin{theorem}
  Let $g$ be a concave function over the space of effort profiles,
  and let $D$ be the set of effort variables in which the evaluator
  is willing to allow investment by the agent.
  \begin{enumerate}[itemsep=0ex, topsep=0pt]
    \item If there exists an $x^*$ such that $\S(x^*) = D$ and
      $x^*$ is incentivizable, then any
      concave function $g$ can be maximized over the space of effort profiles
      in polynomial time.
    \item If $|D|$ is constant, then any concave function $g$ 
      can be maximized over the space of effort profiles in polynomial time.
    \item In general, there exist concave functions $g$ that are NP-hard to
      maximize over the space of effort profiles subject to the
      incentivizability condition.
  \end{enumerate}
  \label{thm:opt}
\end{theorem}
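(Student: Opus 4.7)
The three parts admit rather different treatments, and the NP-hardness in part 3 is where the real work lies.

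For part 1, I would invoke Theorem~\ref{thm:main} directly. By the equivalence of conditions (1) and (3), if some $x^*$ with $\S(x^*) = D$ is incentivizable, then every $x$ with $\S(x) \subseteq D$ is incentivizable as well. Hence the feasible region for the evaluator collapses to the polytope $P = \{x \in \R^m : x \ge \zeros,\ \sum_j x_j \le B,\ x_j = 0 \text{ for } j \notin D\}$, and the problem reduces to maximizing a concave function over $P$, a standard convex program that can be solved in polynomial time by the ellipsoid method or interior-point methods.

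For part 2, since $|D|$ is constant, I would simply enumerate all $O(2^{|D|})$ subsets $S \subseteq D$. For each $S$, the polynomial-time algorithm from Theorem~\ref{thm:main} decides whether $S$ is incentivizable. Whenever it is, Corollary~\ref{cor:S_incent} together with Theorem~\ref{thm:main} part (3) guarantees that every $x$ with $\S(x) \subseteq S$ is incentivizable, so I can maximize $g$ over the polytope $\{x \ge \zeros : \sum_j x_j \le B,\ x_j = 0 \text{ for } j \notin S\}$ via convex optimization. Returning the best value across the constant number of successful subsets gives the global optimum over all incentivizable $x$ with $\S(x) \subseteq D$ in polynomial time.

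For part 3, the approach is a reduction from \textsc{Independent Set}. Given a graph $H = (V,E)$, I would build an effort graph $G$ and take $D$ to contain one ``vertex action'' $x_v$ for each $v \in V$ (plus auxiliary actions), together with a set of ``edge features'' attached via per-edge gadgets. The gadgets would be tuned so that for each edge $\{u,v\} \in E$ there is a favorable conversion of effort --- in the sense underlying the characterization of Theorem~\ref{thm:main} --- from $\{x_u, x_v\}$ into cheaper surrogate actions that weakly improves every feature and frees budget; this forces every incentivizable support restricted to vertex actions to be an independent set of $H$. Conversely, when the support corresponds to an independent set, each edge gadget becomes ``slack'' and no such conversion exists, so incentivizability is preserved. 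Choosing the linear (hence concave) objective $g(x) = \sum_{v \in V} x_v$ then makes the optimum value proportional to the maximum independent set size in $H$, which yields NP-hardness.

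The main obstacle is that last construction: engineering the $\alpha_{ji}$ weights, the auxiliary actions, and the concave $f_i$'s so that (i) every forbidden pair of adjacent vertex actions triggers the swap condition, (ii) every independent-set support avoids it, and (iii) no spurious incentivizable supports --- in particular ones that mix auxiliary actions with vertex actions --- can inflate the objective beyond the true maximum independent set value. This requires isolating the per-edge gadgets so they do not interfere with one another and choosing the conversion functions sharply enough that incentivizability behaves combinatorially rather than in a fragile analytic way. Once the gadget is right, the reduction itself is routine and runs in polynomial time in $|V| + |E|$.
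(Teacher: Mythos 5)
Your treatments of parts 1 and 2 coincide with the paper's: part 1 reduces to concave maximization over the set of profiles supported in $D$, and part 2 is the same enumeration over the $2^{|D|}$ subsets of $D$ with a convex program per incentivizable subset. One small correction to part 1: an incentivized profile always exhausts the budget (a monotone mechanism makes it strictly suboptimal not to), so the feasible region is the simplex $\{x \ge \zeros : \sum_j x_j = B,\ x_j = 0 \text{ for } j \notin D\}$, not the region with $\sum_j x_j \le B$; with the inequality, a concave $g$ could be maximized at a point no mechanism can induce.

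The genuine gap is in part 3, in the choice of objective. Your reduction target (maximum independent set) and per-edge gadget idea match the paper's Lemma~\ref{lem:ind_set}, which puts one feature $F_v$ per vertex, weight $3$ from $x_v$ to $F_v$, and an edge action $x_e$ with weight $2$ into both $F_u$ and $F_v$, so that $z_u = z_v = \frac{1}{2}$, $y_e = \frac{2}{3}$ witnesses $\kappa_S \le \frac{2}{3}$ whenever $S$ contains both endpoints of an edge. But the linear objective $g(x) = \sum_{v \in V} x_v$ does not complete the reduction: every singleton $\{v\}$ is incentivizable in this construction ($\kappa_{\{v\}} = 1$, since covering the weight-$3$ contribution to $F_v$ through the edge action costs $\frac{3}{2} > 1$ units of effort), and any incentivized profile exhausts the budget, so the optimum of $\sum_v x_v$ over incentivizable profiles supported in $D$ is always exactly $B$ --- achieved by putting the entire budget on any single vertex --- independently of the structure of the input graph. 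The objective must be sensitive to the \emph{size of the support}, not the total mass of effort. The paper uses $g(x) = \|x\|_0$, which is concave on the nonnegative orthant (the support of a convex combination of nonnegative vectors is the union of their supports, so $\|\cdot\|_0$ of a convex combination is at least the max of the endpoints) and whose maximum over incentivizable profiles equals the maximum independent set size. Any concave objective that strictly rewards spreading effort, e.g. $\sum_v \sqrt{x_v}$, would also work; a linear one cannot.
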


In summary, we establish that it is computationally hard to maximize even
concave objectives in general, although as long as the number of distinct
actions the evaluator is willing to incentivize is small, concave objectives can
be efficiently maximized.

The above results characterize optimization over effort profiles; instead, the
evaluator may wish to optimize over the space of mechanisms (e.g., to fit to a
dataset). We consider the feasibility of such optimization in
Section~\ref{sec:lin_space}, showing that the set of linear mechanisms
incentivizing particular actions can be highly nonconvex, making optimization
hard in general.

\subsection{Principal-Agent Models and Linear Contracts} \label{sec:pa}
Now that we have specified the formalism, we are in a position to
compare our model with well-studied 
principal-agent models in economics to see how our results and techniques 
relate to those from prior work. 
In the standard principal-agent setting, the principal's
objective is to incentivize an agent to invest effort in some particular
way~\cite{ross1973economic,grossman1983analysis}. Crucially, the principal
cannot observe the agent's action -- only some outcome that is influenced by the
agent's action. Thus, while the principal cannot directly reward the agent based
on the action it takes, it can instead provide rewards based on outcomes that
are more likely under desirable actions.

To our knowledge, this framework has yet to be applied to settings
based on machine-learning classifiers 
as we do here; and yet, principal-agent models fit quite naturally in
this context. A decision-maker wants to evaluate an agent, which it can only
observe indirectly through features. These features, in turn, reflect the
actions taken by the agent. In this context, the principal offers a ``contract''
by specifying an evaluation rule, to which the agent responds strategically by
investing its effort so as to improve its evaluation. So far, this is in keeping
with the abstract principal-agent
framework~\cite{ross1973economic,grossman1983analysis}.

Moreover, some of the key results we derive echo known results from previous
models, though they also differ in important respects. Linear contracts, in
particular, are often necessary or optimal in principal-agent contexts for a
variety of reasons. In modeling bidding for government contracts, for example,
payment schemes are linear in practice for the sake of simplicity, even though
optimal contracts may be nonlinear~\cite{mcafee1986bidding}. In other models,
contracts are naturally linear because agents maximize reward in expectation
over outcomes generated stochastically from their
actions~\cite{grossman1983analysis}.

Even when they aren't necessitated by practical considerations or modeling
choices, linear contracts have been shown to be optimal in their own right in
some principal-agent models. Holmstr{\"o}m and
Milgrom~\cite{holmstrom1987aggregation,holmstrom1991multitask} consider the
interplay between incentives and risk aversion and characterize optimal
mechanisms in this setting, finding that under a particular form of risk
aversion (exponential utility), linear contracts optimally elicit desired
behavior.  Our models do not incorporate a corresponding
notion of risk aversion, and the role of linear mechanisms 
in our work arises for fundamentally different reasons.

Hermalin and Katz provide a model more similar to ours, in which
observations result stochastically from agents'
actions~\cite{hermalin1991moral}. 
Drawing on basic optimization results that we use here as well
(in particular, duality and Farkas' Lemma), 
they characterize actions as
``implementable'' based on whether they can be in some sense replaced by other
actions at lower cost to the agent. At a high level, we will rely on a similar
strategy to prove Theorem~\ref{thm:main}.  

There are, however, some further fundamental differences between the 
principal-agent models arising from the work of Hermalin and Katz
and the questions and results we pursue here.  
In particular, the canonical models of principal-agent
interaction in economics typically
only have the expressive power to to incentivize a single action, which
stochastically produces a single observed outcome. This basic difference leads
to a set of important distinctions for the modeling goals we have: because our
goal is to incentivize investment over multiple activities given a
multi-dimensional feature vector, with the challenge that different mixtures of
activities can deterministically produce the same feature vector, our model
cannot be captured by these earlier formalisms. 

An important assumption in our model, and in many principal-agent models in
general, is that the principal knows how the agent's effort affects
observations. Recent work has sought to relax this assumption, finding that
linear contracts are optimal even when the principal has incomplete knowledge of
the agent's cost structure~\cite{carroll2015robustness}. It would be an
interesting subject for future work to extend our model so that the principal
does not know or needs to learn the agent's cost structure.

\section{Incentivizing Particular Effort Profiles} \label{sec:incent}
In this section, we develop a tight characterization of which effort profiles
can be incentivized and find linear mechanisms that do so. For simplicity, we'll
begin with the special case where the effort profile to be incentivized is $x^*
= B \cdot e_j$, with $e_j$ representing the unit vector in coordinate $j$ ---
that is, the entire budget is invested in effort on action $j$. Then,
we'll apply the insights from this case to the general case.

\paragraph*{The special case where $|\S(x^*)| = 1$.}
Recall that in the classroom example, the tipping point for when the intended
effort profile could be incentivized hinged on the question of
\textit{substitutability}: the rate at which undesirable effort could be
substituted for the intended effort. We'll characterize this rate as the solution
to a linear program. In an effort graph $G$, recall that $\alpha \in
\R^{m \times n}$ is the matrix with entries $\alpha_{j i}$. Let $\row{\alpha}{j}
\in \R^n$ be the $j$th row of $\alpha$. Then, we'll define the substitutability
of $x_j$ to be
\begin{align*}
  \kappa_j \triangleq \min_{y \in \R^m} ~& y\tran \ones \numberthis
  \label{eq:dual}
  &\text{s.t.} ~& \alpha\tran y \ge \row{\alpha}{j} \\
  &&& y \ge \zeros
\end{align*}
Intuitively, $y$ is a redistribution of effort out of $x_j$ that weakly
increases all feature values.
Note that $\kappa_j \le 1$ because the solution $y = e_j$ (the vector with $1$
in the $j$th position and $0$ elsewhere) is feasible and has value 1.
In Lemma~\ref{lem:subst}, we'll use this notion of substitutability
to show that whenever $\kappa_j < 1$, the agent will at optimality put no effort
into $x_j$. Conversely, in Lemma~\ref{lem:L_j}, we'll show that the when $\kappa_j = 1$,
there exists a linear mechanism incentivizing $\beta$ incentivizing the solution
$x^* = B \cdot e_j$.

It might seem odd that this characterization depends only on $\kappa_j$, which
is independent of both the budget $B$ and effort conversion functions $f_i$;
however, the particular mechanisms that incentivize $x^*$ will depend on these.
This will also be true in the general case: whether or not a particular effort
profile can be incentivized will not depend on $B$ or $f_i$, but the exact
mechanisms that do so will.

\begin{lemma}
  If $\kappa_j < 1$, then in any monotone mechanism $M$,
  $x_j^* = 0$.
  \label{lem:subst}
\end{lemma}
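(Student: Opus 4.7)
The plan is to argue by contradiction: assume $x^*_j > 0$ at an optimum, and use the witness $y^*$ from the LP defining $\kappa_j$ to build a strictly better effort profile. The key observation is that $y^*$ tells us how to simulate a unit of effort on action $j$ using at most $\kappa_j < 1$ units of effort spread across other actions, while weakly dominating the feature-arguments that $x_j$ contributes.

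The first step is a preprocessing move on $y^*$: I would show we may assume $y^*_j = 0$ without loss of generality. If the optimal $y^*$ has $y^*_j = a$, then since $y^* \ge \zeros$ and $\sum_k y^*_k = \kappa_j < 1$ we must have $a < 1$, and the rescaled vector $z$ with $z_j = 0$ and $z_k = y^*_k/(1-a)$ for $k \ne j$ is feasible and still has $\sum_k z_k = (\kappa_j - a)/(1-a) < 1$. Relabeling, I assume $y^*_j = 0$.

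Next I would perturb $x^*$ by setting $\tilde x = x^* - \epsilon e_j + \epsilon y^*$ for a small $\epsilon \in (0, x^*_j]$. Three checks are routine: (i) $\tilde x \ge \zeros$; (ii) the feature-argument satisfies $\sum_k \alpha_{ki}\tilde x_k - \sum_k \alpha_{ki}x^*_k = \epsilon\bigl(\sum_k \alpha_{ki} y^*_k - \alpha_{ji}\bigr) \ge 0$ for every $i$ by the LP constraint $\alpha\tran y^* \ge \row{\alpha}{j}$, so by strict monotonicity of each $f_i$ we get $F_i(\tilde x) \ge F_i(x^*)$, and hence $M(F(\tilde x)) \ge M(F(x^*))$ by monotonicity of $M$; and (iii) the total budget consumed drops by exactly $\epsilon(1 - \kappa_j) > 0$, so $\tilde x$ has strictly positive budget slack.

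The main (though still mild) obstacle is converting that slack into a \emph{strict} improvement in $M$, since monotonicity alone only gives a weak inequality. Here I would invoke the second clause of Definition~\ref{def:monotone}: at the point $F(\tilde x)$ there exists an index $i^\star$ such that strictly increasing $F_{i^\star}$ strictly increases $M$. Picking any action $k^\star$ with $\alpha_{k^\star i^\star} > 0$ (which must exist for any feature that is actually affected by effort, else that feature is a constant and irrelevant) and adding the leftover $\epsilon(1-\kappa_j)$ of budget to $\tilde x_{k^\star}$ strictly increases the argument of $f_{i^\star}$, hence $F_{i^\star}$, and therefore $M$ by monotonicity together with the strict clause. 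This produces a feasible profile with $M$ strictly greater than $M(F(x^*))$, contradicting optimality of $x^*$ and proving $x^*_j = 0$.
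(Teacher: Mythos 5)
Your proof is correct and takes essentially the same approach as the paper: both use the optimal LP witness $y$ to reroute effort out of action $j$, observe that every feature-argument weakly increases while the budget constraint becomes slack, and then invoke the strict clause of Definition~\ref{def:monotone} to turn that slack into a strict improvement, contradicting optimality. The only differences are cosmetic --- an $\epsilon$-perturbation rather than substituting out all of $x_j$ at once, and a harmless (and unnecessary) normalization to $y^*_j = 0$, which the paper instead handles by setting $x_j' = y_j x_j$.
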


\begin{proof}
  Intuitively, this is an argument formalizing substitution: if $\kappa_j < 1$,
  replacing each unit of effort in $x_j$ with $y_k$ units of effort (where $y$
  comes from the optimal solution to~\eqref{eq:dual}) on each $x_k$ for $k\in
  [m]$ weakly increases all of the feature values $F_i$ while making the budget
  constraint slack. Therefore, any solution with $x_j > 0$ cannot be optimal.

  In more detail,
  consider any solution $x$ with $x_j > 0$. We'll begin by showing that the
  agent's utility is at least as high in the solution $x'$ with $x_k' = x_k +
  y_k x_j$ for all $k \ne j$ and $x_j' = y_j x_j$, where $y$ is an optimal
  solution to the linear program in~\eqref{eq:dual}. Note that $y_j \le \kappa_j < 1$, so
  $x'$ is different from $x$.

  We know from the constraint on~\eqref{eq:dual} that $\alpha\tran y \ge
  \row{\alpha}{j}$, and therefore
  \begin{equation}
    \label{eq:subst}
    \sum_{k=1}^m \alpha_{k i} y_k \ge \alpha_{ji}
  \end{equation}
  for all $i$.
  Then, by~\eqref{eq:subst},
  \begin{align*}
    f_i\p{\sum_{k=1}^m \alpha_{k i} x_k}
    &\le f_i\p{\sum_{k\ne j} \alpha_{k i} x_k  + x_j \sum_{k=1}^m
    \alpha_{k i} y_k}
    = f_i\p{\sum_{k=1}^m \alpha_{k i} x_k'}
  \end{align*}
  Thus, the value of each feature weakly increases from $x$ to $x'$, so in any
  monotone mechanism $M$, the agent's utility for $x'$ is at least as high as it
  is for $x$. Moreover, the budget constraint on $x'$ isn't tight, since
  \[
    \sum_{k=1}^m x_k' = \sum_{k \ne j} (x_k + y_k x_j) + y_j x_j =
    \sum_{k \ne j} x_k + x_j \sum_{k=1}^m y_k < \sum_{k=1}^m
    x_k \le B.
  \]
  By the definition of a monotone mechanism, no solution for which the budget
  constraint isn't tight can be optimal, meaning $x'$ is not optimal. This
  implies that $x$ is not optimal.
\end{proof}

Thus, $\kappa_j < 1$ implies that $x_j = 0$ in any optimal solution. All that
remains to show in this special case is the converse: if $\kappa_j = 1$, there
exists $\beta$ that incentivizes the effort profile $x^* = B \cdot e_j$. To do
so, define $A(x) \in \R^{m \times n}$ to be the matrix with entries $[A(x)]_{j
i} = \alpha_{j i} f_i'([\alpha\tran x]_i)$, and define $a_j(x) \in \R^n$ to be
the $j$th row of $A(x)$. Then, we can define the polytope
\begin{equation}
  \label{eq:ptope}
  \sproof_j \triangleq \{\beta \given A(x^*) \beta \le \beta\tran a_j(x^*) \cdot
  \ones\}.
\end{equation}

By construction, $\sproof_j$ is the set of linear mechanisms that incentivize
$x^*$. This is because for all $k \in [m]$, every
$\beta \in \sproof_j$ satisfies
\begin{align*}
  [A(x^*) \beta]_k \le \beta\tran a_j(x^*)
  &\Longleftrightarrow \sum_{i=1}^n \alpha_{ki} \beta_i f_i'([\alpha\tran x^*]_i) \le
  \sum_{i=1}^n \alpha_{ji} \beta_i f_i'([\alpha\tran x^*]_i)
  \Longleftrightarrow \pH{x_k}{x^*} \le \pH{x_j}{x^*}
\end{align*}
By Lemma~\ref{lem:kkt} in Appendix~\ref{app:agent_response}, this implies that
$x^*$ is an optimal agent response to any $\beta \in \sproof_j$. To complete the
proof of this special case of Theorem~\ref{thm:main}, it suffices to show that
$\sproof_j$ is non-empty, which we do via linear programming duality.
\begin{lemma}
  \label{lem:L_j}
  If $\kappa_j = 1$, then $\sproof_j$ is non-empty.
\end{lemma}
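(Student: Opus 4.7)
The plan is to invoke LP duality on the program defining $\kappa_j$ and then rescale by the marginal gains $f_i'$ to land inside $\sproof_j$.

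First I would write down the dual of the linear program \eqref{eq:dual}. The primal has variables $y \in \R^m$, constraints $\alpha\tran y \ge \row{\alpha}{j}$ and $y \ge \zeros$, and objective $\ones\tran y$. Its dual is
\begin{align*}
\max_{\beta' \in \R^n} ~& \row{\alpha}{j}\tran \beta' &
\text{s.t.} ~& \alpha \beta' \le \ones, \\
&&& \beta' \ge \zeros.
\end{align*}
The primal is feasible (take $y = e_j$) and bounded below by $0$, so strong duality applies. Since $\kappa_j = 1$ by hypothesis, there is a dual optimum $\beta' \ge \zeros$ satisfying $\alpha\beta' \le \ones$ and $\row{\alpha}{j}\tran \beta' = 1$. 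Rewriting componentwise, this says $\sum_i \alpha_{ki}\beta_i' \le 1 = \sum_i \alpha_{ji}\beta_i'$ for every $k \in [m]$, i.e.\ $\alpha \beta' \le (\row{\alpha}{j}\tran \beta')\cdot \ones$.

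Next I would rescale $\beta'$ to absorb the factors $f_i'$ appearing in $A(x^*)$. Since each $f_i$ is strictly increasing, $f_i'([\alpha\tran x^*]_i) > 0$ for every $i$, so I can define
\[
\beta_i \triangleq \frac{\beta_i'}{f_i'([\alpha\tran x^*]_i)} \ge 0.
\]
Plugging this into the definitions of $A(x^*)$ and $a_j(x^*)$ cancels the $f_i'$ factors exactly: for every $k$,
\[
[A(x^*)\beta]_k = \sum_{i=1}^n \alpha_{ki} f_i'([\alpha\tran x^*]_i) \cdot \frac{\beta_i'}{f_i'([\alpha\tran x^*]_i)} = \sum_{i=1}^n \alpha_{ki}\beta_i' = [\alpha\beta']_k,
\]
and similarly $\beta\tran a_j(x^*) = \row{\alpha}{j}\tran \beta'$. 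The dual feasibility inequality therefore translates into $A(x^*)\beta \le \beta\tran a_j(x^*)\cdot \ones$, so $\beta \in \sproof_j$. Since $\row{\alpha}{j}\tran \beta' = 1 > 0$, $\beta$ is not the zero vector, which together with $\beta \ge \zeros$ makes it a valid linear mechanism.

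The conceptual step is the first one -- recognizing that the substitutability LP is precisely (up to the $f_i'$ rescaling) the dual of the certificate we want to produce for $\sproof_j$. Once this duality is set up, the remainder is a routine verification: the strict positivity of $f_i'$ (guaranteed by strict monotonicity of $f_i$) is what makes the rescaling well-defined and preserves nonnegativity, and no assumption about $B$ or the specific shape of the $f_i$ beyond these properties is used, which matches the observation in the preceding discussion that incentivizability depends only on $\kappa_j$.
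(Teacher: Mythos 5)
Your proof is correct and takes essentially the same approach as the paper: both arguments apply LP duality to the substitutability program \eqref{eq:dual} and use the strict positivity of the $f_i'$ to pass between the $\alpha$-scaled and $A(x^*)$-scaled versions of the problem. The only (cosmetic) difference is that the paper rescales the dual \emph{constraints} to show the two LPs are equivalent, whereas you rescale the dual \emph{solution} $\beta'$ after the fact; the content is identical.
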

\begin{proof}
  Consider the following linear program.
  \begin{align*}
    \max_{\beta \in \R^n} ~& \beta\tran a_j(x^*) \numberthis \label{eq:primal}
    &\text{s.t.} ~& A(x^*) \beta \le \ones \\
    &&& \beta \ge \zeros
  \end{align*}
  Clearly, if~\eqref{eq:primal} has value at least 1, then $\sproof_j$ is
  non-empty because any $\beta$ achieving the optimum is in $\mc
  L_j$ by~\eqref{eq:ptope}. The dual of~\eqref{eq:primal} is
  \begin{align*}
    \min_{y \in \R^m} ~& y\tran \ones \numberthis \label{eq:dual2}
    &\text{s.t.} ~& A(x^*) \tran y \ge a_j(x^*) \\
    &&& y \ge \zeros
  \end{align*}
  We can simplify the constraints on~\eqref{eq:dual2}: for all $i$,
  \begin{align*}
    [A(x^*)\tran y]_i \ge [a_j(x^*)]_i
    &\Longleftrightarrow \sum_{k=1}^m \alpha_{k i} y_k f_i'([\alpha\tran x^*]_i) \ge
    \alpha_{ji} f_i'([\alpha\tran x^*]_i)
    \Longleftrightarrow \sum_{k=1}^m \alpha_{k i} y_k \ge \alpha_{ji}
  \end{align*}
  Thus,~\eqref{eq:dual2} is equivalent to~\eqref{eq:dual}, which has value
  $\kappa_j = 1$ by assumption. By duality,~\eqref{eq:primal} also has value
  $\kappa_j = 1$, meaning $\sproof_j$ is non-empty.
\end{proof}
We have shown that if $\kappa_j = 1$, then any $\beta \in \sproof_j$
incentivizes $x^*$. Otherwise, by Lemma~\ref{lem:subst}, there are no monotone
mechanisms that incentivize $x^*$. Next, we'll generalize these ideas to prove
Theorem~\ref{thm:main}.

\paragraph*{The general case.}
We'll proceed by defining the analogue of $\kappa_j$ in the case where the
effort profile to be incentivized has support on more than one component.
Drawing upon the reasoning in Lemmas~\ref{lem:subst} and~\ref{lem:L_j}, we'll
prove Theorem~\ref{thm:main}.

Consider an arbitrary effort profile $x^*$ such that $\sum_{i=1}^m x_j^* = B$,
and let $\S(x^*)$ be the support of $x^*$. Let $\alpha_S$ be $\alpha$ with the
rows not indexed by $S$ zeroed out, i.e., $[\alpha_S]_{ji} = \alpha_{ji}$ if $j
\in S$ and 0 otherwise. Let $\ones_S$ be the vector with a 1 for every $j \in S$
and 0 everywhere else, so $\ones_S = \sum_{j \in S} e_j$. Similarly to how we
defined $\kappa_j$, define
\begin{align*}
  \kappa_S \triangleq \min_{y \in \R^m,z \in \R^m} ~& y\tran \ones
  \numberthis \label{eq:dualset} &
  \text{s.t.} ~& \alpha\tran y \ge \alpha_S\tran z \\
  &&& z\tran \ones_S \ge 1 \\
  &&& y,z \ge \zeros
\end{align*}
Intuitively, we can think of the effort given by $z$ as being substituted out
and replaced by $y$.
Note that $\kappa_S \le \min_{j \in S} \kappa_j$, because the special case where
$z_j = 1$ yields~\eqref{eq:dual}. In a generalization of Lemma~\ref{lem:subst},
we'll show that $\kappa_S < 1$ implies that no optimal solution has $x_j > 0$
for all $j \in S$. Lemma~\ref{lem:subst} formalized an argument based on
substitutability, in which the effort invested on a particular node could be
moved to other nodes while only improving the agent's utility. We generalize
this to the case when effort invested on a subset of the nodes can be replaced
by moving that effort elsewhere.
\begin{lemma}
  For any $S \subseteq [m]$, if $\kappa_S < 1$, then any effort profile $x$ such
  that $x_j > 0$ for all $j \in S$ cannot be optimal.
  \label{lem:subst2}
\end{lemma}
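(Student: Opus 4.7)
The plan is to mimic the substitution argument in Lemma~\ref{lem:subst} almost verbatim, but now using both the $y$ and $z$ vectors arising from the optimum of the linear program in~\eqref{eq:dualset}. In Lemma~\ref{lem:subst}, the vector $z$ was implicitly $e_j$; here $z$ distributes the ``outgoing'' effort across the coordinates in $S$, and $y$ describes how the resulting effort is redeposited to weakly preserve all features.

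First, I would fix an optimal solution $(y,z)$ of~\eqref{eq:dualset}. The $z$-variables outside $S$ neither appear in the objective nor in the feasibility constraints beyond $z \ge \zeros$, so I may assume $z_j = 0$ for $j \notin S$, which makes $\alpha_S\tran z = \alpha\tran z$. I would then take an arbitrary effort profile $x$ with $x_j > 0$ for every $j \in S$, pick $\epsilon > 0$ small enough that $\epsilon z_j \le x_j$ for all $j \in S$ (possible because only finitely many such $j$ and all $x_j > 0$), and define the perturbed profile
\begin{equation*}
  x' \;=\; x \;-\; \epsilon\, z \;+\; \epsilon\, y.
\end{equation*}
Nonnegativity of $x'$ is immediate: for $j \in S$ the choice of $\epsilon$ handles it, and for $j \notin S$ we only add the nonnegative quantity $\epsilon y_j$.

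Next, I would verify that each feature weakly increases from $x$ to $x'$. This reduces to $\alpha\tran(y - z) \ge \zeros$, which is exactly the first constraint in~\eqref{eq:dualset} combined with $\alpha_S\tran z = \alpha\tran z$. Therefore for every $i$,
\begin{equation*}
  f_i\!\left(\sum_{k=1}^m \alpha_{k i} x_k\right) \;\le\; f_i\!\left(\sum_{k=1}^m \alpha_{k i} x_k'\right),
\end{equation*}
so monotonicity of $M$ gives $M(F') \ge M(F)$. Finally, I would compute the total effort:
\begin{equation*}
  \sum_{k=1}^m x_k' \;=\; \sum_{k=1}^m x_k \;+\; \epsilon\bigl(y\tran\ones - z\tran\ones_S\bigr) \;<\; \sum_{k=1}^m x_k,
\end{equation*}
using $y\tran\ones = \kappa_S < 1$ and $z\tran\ones_S \ge 1$. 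So the budget constraint at $x'$ is slack, and the second clause of Definition~\ref{def:monotone} then lets the agent strictly increase $M$ by spending the remaining budget on an appropriate feature. Hence $x$ is not optimal.

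I do not anticipate a serious obstacle: the argument is structurally identical to Lemma~\ref{lem:subst}, with the scalar substitution rate $\kappa_j$ replaced by the LP value $\kappa_S$ and the single-coordinate removal $e_j$ replaced by the distribution $z$. The only place that needs care is the bookkeeping that allows $z$ to be chosen supported on $S$ (so that $\alpha_S\tran z = \alpha\tran z$), and the observation $z\tran\ones_S \ge 1 > \kappa_S = y\tran\ones$ which is what forces the budget slack.
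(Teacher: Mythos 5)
Your proposal is correct and follows essentially the same route as the paper: the paper likewise takes an optimal $(y,z)$ for~\eqref{eq:dualset}, shifts $x$ by $c(y-z)$ for a suitable scalar $c = \min_{j\in S} x_j/z_j$ (your ``small enough $\epsilon$''), checks that all features weakly increase via the constraint $\alpha\tran y \ge \alpha_S\tran z$, and concludes from $y\tran\ones = \kappa_S < 1 \le z\tran\ones_S$ that the budget constraint becomes slack, contradicting optimality. The only cosmetic difference is your explicit normalization $z_j = 0$ for $j \notin S$, which the paper handles implicitly by only subtracting $cz_j$ on coordinates in $S$.
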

\begin{proof}
  The following proof builds on that of Lemma~\ref{lem:subst}. Let $y$ and $z$
  be optimal solutions to~\eqref{eq:dualset}. We know that for all $i$,
  \begin{equation}
    \label{eq:subst2}
    \sum_{j=1}^m \alpha_{j i} y_j \ge \sum_{j \in S} \alpha_{ji} z_j
  \end{equation}
  Let $c \triangleq \min_{j \in S} x_j/z_j$. Note that $c > 0$ because by
  assumption, $x_j > 0$ for all $j \in S$. It is well-defined because $z\tran
  \ones_S \ge 1$ and $z \ge \zeros$, so $z_j$ is strictly positive for some $j \in
  S$. By this definition, $x_j - c z_j \ge 0$ for all $j \in S$.

  We'll again define another solution $x'$ with utility at least as high as $x$,
  but with the budget constraint slack. For all $i$,
  \begin{align*}
    [\alpha\tran x]_i &=
    \sum_{j=1}^m \alpha_{j i} x_j \\
    &= \sum_{j \notin S} \alpha_{j i} x_j  + \sum_{j \in S}
    \alpha_{j i} x_j \\
    &= \sum_{j \notin S} \alpha_{j i} x_j  + \sum_{j \in S}
    \alpha_{j i} (x_j - cz_j) + c \sum_{j \in S} \alpha_{ji} z_j \\
    &\le \sum_{j \notin S} \alpha_{j i} x_j  + \sum_{j \in S}
    \alpha_{j i} (x_j - cz_j) + c \sum_{j=1}^m \alpha_{ji} y_j
    \tag{By~\eqref{eq:subst2}} \\
    &= \sum_{j \notin S} \alpha_{j i} (x_j + cy_j)  + \sum_{j \in S}
    \alpha_{j i} (x_j + c(y_j-z_j)) \\
    &\triangleq [\alpha\tran x']_i,
  \end{align*}
  where we have defined
  \[
    x_j' \triangleq \begin{cases}
      x_j + c y_j & j \notin S \\
      x_j + c (y_j - z_j) & z \in S
    \end{cases}.
  \]
  Because $x_j - c z_j \ge 0$ for all $j \in S$, $x'$ is a valid effort profile.
  Since $f_i$ is increasing, $f_i([\alpha\tran x]_i) \le f_i([\alpha\tran
  x']_i)$. However,
  \[
    \sum_{i=1}^m x_j' = \sum_{j \notin S} x_j + cy_j + \sum_{j \in S} x_j + c
    (y_j - z_j) = x\tran \ones + c (y\tran \ones - z\tran \ones_S) < B.
  \]
  Thus, the budget constraint for $x'$ is not tight, and so for any monotone
  mechanism, there exists a solution $x''$ which is strictly better than $x'$
  and $x$, meaning $x$ is not optimal.
\end{proof}

Lemma~\ref{lem:subst2} tells us which subsets of variables definitely can't be
jointly incentivized. However, given a subset of variables, it doesn't a priori
tell us if these variables \textit{can} be jointly incentivized, and if so,
which particular effort profiles on these variables are incentivizable. In fact,
we'll show that any $x^*$ such that $\kappa_{\S(x^*)} = 1$ is incentivizable.

\begin{lemma}
  Define
  \begin{equation}
    \label{eq:Lx_def}
    \sproof(x) \triangleq \{\beta \given A(x) \beta \le
    \frac{1}{B} {x}\tran A(x) \beta \cdot \ones\}
  \end{equation}
  If $\kappa_{\S(x^*)} = 1$, then $\sproof(x^*)$ is the set of
  linear mechanisms that incentivize $x^*$, and $\sproof(x^*)$ is non-empty.
  \label{lem:L_S}
\end{lemma}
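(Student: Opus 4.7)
The plan is to mirror the two-step structure of Lemmas~\ref{lem:subst} and~\ref{lem:L_j}, generalized from a single coordinate to the support $S = \S(x^*)$. The two substantive claims are (i) that $\sproof(x^*)$ equals the set of linear mechanisms incentivizing $x^*$, and (ii) that $\sproof(x^*)$ is non-empty.

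For (i), I would invoke the KKT conditions for the agent's concave program (via Lemma~\ref{lem:kkt} in the appendix). A linear mechanism $\beta$ incentivizes $x^*$ iff there exists $\lambda \ge 0$ with $[A(x^*)\beta]_j \le \lambda$ for every $j$, and equality whenever $j \in S$. Because the mechanism is strictly monotone in at least one feature and each $f_i$ is strictly increasing, the budget is tight at $x^*$ (i.e., $\sum_j x_j^* = B$), so taking the inner product of both sides with $x^*$ pins down $\lambda = \frac{1}{B} x^{*\top} A(x^*)\beta$. Substituting into the KKT inequality yields exactly $A(x^*)\beta \le \frac{1}{B} x^{*\top}A(x^*)\beta \cdot \ones$, i.e., $\beta \in \sproof(x^*)$. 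Conversely, for any $\beta \in \sproof(x^*)$, the inner product with $x^*$ shows that $\sum_{j \in S} x_j^*([A(x^*)\beta]_j - \lambda) = 0$ with $\lambda = \frac{1}{B} x^{*\top}A(x^*)\beta$; each summand is non-positive, so each must vanish, recovering the KKT equalities on $S$.

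For (ii), I would set up a primal LP extending~\eqref{eq:primal}: maximize $\tfrac{1}{B} x^{*\top} A(x^*) \beta$ over $\beta \in \R^n$ subject to $A(x^*) \beta \le \ones$ and $\beta \ge \zeros$. Any feasible $\beta$ with objective value at least $1$ lies in $\sproof(x^*)$, since then $A(x^*)\beta \le \ones \le \tfrac{1}{B} x^{*\top}A(x^*)\beta \cdot \ones$. The dual is: minimize $y^\top \ones$ over $y \in \R^m$ subject to $A(x^*)^\top y \ge \tfrac{1}{B} A(x^*)^\top x^*$ and $y \ge \zeros$. As in Lemma~\ref{lem:L_j}, the strictly positive factors $f_i'([\alpha^\top x^*]_i)$ cancel from both sides of the dual constraint, reducing it to $\alpha^\top y \ge \alpha_S^\top (x^*/B)$ (using that $x^*$ is supported on $S$). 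Setting $z = x^*/B$, the pair $(y,z)$ is then feasible for~\eqref{eq:dualset}: we have $z \ge \zeros$, $z^\top \ones_S = 1$ by budget-tightness, and the constraint on $y$ matches. Hence the dual value is at least $\kappa_S = 1$, and by strong duality the primal value is also at least $1$, producing a $\beta^* \in \sproof(x^*)$.

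The main obstacle is in part (i): unlike the $|S|=1$ case where $\lambda$ is simply the common value $[A(x^*)\beta]_j$, here one must identify $\lambda$ with the weighted average $\frac{1}{B} x^{*\top}A(x^*)\beta$ and then use budget-tightness to upgrade the single inequality defining $\sproof(x^*)$ into per-coordinate equalities on all of $S$. Part (ii) is then a fairly direct generalization of Lemma~\ref{lem:L_j}, once one observes that $z = x^*/B$ is the right witness to plug into~\eqref{eq:dualset} in order to bound the dual value of our LP by $\kappa_S$.
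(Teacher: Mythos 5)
Your proposal is correct and follows essentially the same route as the paper: part (ii) is the paper's duality argument with the witness $z = x^*/B$ plugged into~\eqref{eq:dualset} (the paper phrases this via $\kappa_S(z)$ and its dual $\eta(z)$), and part (i) is the paper's observation that the defining inequality of $\sproof(x^*)$ forces the convex combination $\frac{1}{B}x^{*\top}A(x^*)\beta$ to dominate each of its own terms, hence equalizes the partials on $\S(x^*)$, which is exactly the KKT characterization of Lemma~\ref{lem:kkt}. No gaps.
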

\begin{proof}
  Let $S = \S(x^*)$.
  We know that for any $z$ such that $z \tran \ones_S \ge 1$,
  \begin{align*}
    \kappa_S \le \kappa_S(z) \triangleq \min_{y \in \R^m} ~& y\tran \ones
    \numberthis \label{eq:kSz}
    &\text{s.t.} ~& \alpha\tran y \ge \alpha_S\tran z \\
    &&& y \ge \zeros
  \end{align*}
  because we've just written~\eqref{eq:dualset} without allowing for optimization
  over $z$. Therefore, if $\kappa_S = 1$, then $\kappa_S(z) = 1$ for any $z$.
  We can write each constraint $[\alpha\tran y]_i \ge
  [\alpha_S\tran z]_i$ as
  \begin{align*}
    [\alpha\tran y]_i \ge [\alpha_S\tran z]_i
    &\Longleftrightarrow \sum_{j=1}^m \alpha_{ji} y_j \ge \sum_{j \in S}
    \alpha_{ji} z_j \\
    &\Longleftrightarrow \sum_{j=1}^m \alpha_{ji} f_i'([\alpha\tran x^*]_i) y_j \ge
    \sum_{j \in S} \alpha_{ji} f_i'([\alpha\tran x^*]_i) z_j \\
    &\Longleftrightarrow \sum_{j=1}^m [A(x^*)]_{ji}\tran y_j \ge
    \sum_{j \in S} [A(x^*)]_{ji} z_j
  \end{align*}
  Thus,~\eqref{eq:kSz} is equivalent to the following optimization, where
  similarly to the definition of $\alpha_S$, we define $A_S(x)$ to be $A(x)$
  with all rows $j \notin S$ zeroed out.
  \begin{align*}
    \kappa_S(z) = \min_{y \in \R^m} ~& y\tran \ones
    \numberthis \label{eq:kSz_alt}
    &\text{s.t.} ~& A(x^*) \tran y \ge A_S(x^*) \tran z \\
    &&& y \ge \zeros
  \end{align*}
  The dual of~\eqref{eq:kSz_alt} is
  \begin{align*}
    \eta(z) \triangleq \max_{\beta \in \R^n} ~& \beta\tran (A_S(x^*)\tran z)
    \numberthis \label{eq:primalset}
    &\text{s.t.} ~& A(x^*) \beta \le \ones \\
    &&& \beta \ge \zeros
  \end{align*}
  Thus,~\eqref{eq:primalset} has value $\eta(z) = \kappa_S(z) = 1$. Recall that
  \[
    \sproof(x^*) = \{\beta \given A(x^*) \beta \le
    \frac{1}{B} {x^*}\tran A(x^*) \beta \cdot \ones\}.
  \]
  Clearly, $\sproof(x^*)$ is non-empty because plugging in $z =
  \frac{x^*}{B}$,~\eqref{eq:primalset} has value $\eta(z) = 1$, meaning there
  exists $\beta$ such that for all $j$,
  \begin{equation}
    \eta\p{\frac{x^*}{B}} = \frac{1}{B} \beta\tran (A_S(x^*)\tran x^*) = 1 \ge
    [A(x^*) \beta]_j \label{eq:Lz_cond}
  \end{equation}
  We'll show that $\beta$ incentivizes the agent to
  invest $\xz$ if and only if $\beta \in \sproof(\xz)$.
  Note that~\eqref{eq:Lz_cond} is true if and only if
  \begin{align*}
    \pH{x_j}{\xz} &\le \sum_{k \in S} \frac{x_k^*}{B} \pH{x_k}{\xz} \tag{$\forall
    j \in [m]$}.
  \end{align*}
  The right hand side is the convex combination of the partial
  derivatives of $H$ with respect to each of the $k \in S$. Since this convex
  combination is at least as large as each partial in the combination, it must be
  the case that all of these partials on the right hand side are equal to one
  another. In other words, this is true if and only if $\pH{x_j}{\xz} =
  \pH{x_{j'}}{\xz}$ for all $j, j' \in S$.

  By Lemma~\ref{lem:kkt} in Appendix~\ref{app:agent_response}, this is true if
  and only if $\xz$ is an optimal effort profile, meaning $\sproof(\xz)$ is
  exactly the set of linear mechanisms that incentivize $\xz$.
\end{proof}

Thus, we've shown Theorem~\ref{thm:main}: for any target effort profile $x^*$,
either $\kappa_{\S(x^*)} = 1$, in which case any $\beta \in \mc L(x^*)$
incentivizes $x^*$, or $\kappa_{\S(x^*)} < 1$, in which case no monotone
mechanism incentivizes $x^*$ by Lemma~\ref{lem:subst2}.

\section{Optimizing other Objectives} \label{sec:opt}
So far, we have given a tight characterization of which effort profiles can be
incentivized. Moreover, we have shown that whenever an effort profile can be
incentivized, we can compute a set of linear mechanisms that do so. However,
this still leaves room for the evaluator to optimize over other preferences. For
instance, perhaps profiles that distribute effort among many activities are more
desirable, or perhaps the evaluator has a more complex utility function over the
agent's effort investment.

In this section, we consider the feasibility of such optimization subject to the
constraints imposed by incentivizability. We show that optimization over effort
profiles is possible in particular instances, but in general, it is
computationally hard to optimize even simple objectives over incentivizable
effort profiles.

\paragraph*{Incentivizing a subset of variables.}
For the remainder of this section, we will assume that the evaluator has a set
of designated effort variables $D \subseteq [m]$, and they want to incentivize
the agent to only invest in effort variables in $D$. Recall that a set of
actions $S$ is incentivizable if and only if $\kappa_S = 1$, where $\kappa_S$ is
defined in~\eqref{eq:dualset}. We define the set system
\begin{equation}
  \label{eq:F_D_def}
  \mc F_D = \{S \subseteq D \given \kappa_S = 1\}
\end{equation}
By Theorem~\ref{thm:main}, $\mc F_D$ gives the sets of effort variables that can
be jointly incentivized. As we will show, a consequence of our results from
Section~\ref{sec:incent} is that $\mc F_D$ is downward-closed, meaning that if
$S \in \mc F_D$, then $S' \in \mc F_D$ for any $S' \subseteq S$.

We begin by characterizing when it is feasible to incentivize some $x$ such that
$\S(x) \subseteq D$. As the following lemma shows, this can be done if and only
if some individual $j \in D$ is incentivizable on its own.
\begin{lemma}
  It is possible to incentivize effort in a subset of a designated set of effort
  nodes $D \subseteq [m]$ if and only if $\max_{j \in D} \kappa_j = 1$.
  \label{lem:incent_subset}
\end{lemma}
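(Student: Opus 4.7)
The plan is to reduce both directions of the lemma to the single-variable characterization already established in Lemmas~\ref{lem:subst} and~\ref{lem:L_j}. Since $\kappa_j \le 1$ for every $j$, the condition $\max_{j \in D} \kappa_j = 1$ is equivalent to the existence of some $j^* \in D$ with $\kappa_{j^*} = 1$, and I will work with this reformulation throughout.

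For the ``if'' direction, suppose some $j^* \in D$ satisfies $\kappa_{j^*} = 1$. By Lemma~\ref{lem:L_j}, the polytope $\sproof_{j^*}$ defined in~\eqref{eq:ptope} is nonempty, and by the construction of $\sproof_{j^*}$ discussed immediately before that lemma, any $\beta \in \sproof_{j^*}$ incentivizes the profile $x^* = B \cdot e_{j^*}$. Since $\S(x^*) = \{j^*\} \subseteq D$, this exhibits an incentivizable effort profile supported in $D$, as required.

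For the ``only if'' direction, I argue the contrapositive: if $\kappa_j < 1$ for every $j \in D$, then no monotone mechanism can incentivize any effort profile supported in $D$. Suppose for contradiction that some monotone mechanism $M$ incentivizes $x^*$ with $\S(x^*) \subseteq D$. By Lemma~\ref{lem:subst}, $\kappa_j < 1$ forces $x_j^* = 0$ for every $j \in D$, and hence for every $j \in \S(x^*)$ as well, giving $x^* = \zeros$. But the second clause of Definition~\ref{def:monotone} implies that, under any monotone mechanism, the agent's optimal response saturates the budget, so $\sum_j x_j^* = B > 0$ and $x^* \ne \zeros$, a contradiction.

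The argument is largely mechanical and I do not expect a serious obstacle; the only point that needs care is the interpretation of ``incentivize effort in a subset of $D$'' as requiring a genuinely nonzero incentivized profile with support inside $D$. This nonzero-ness is not an extra hypothesis but an automatic consequence of the budget-saturating behavior forced by monotonicity, and it is precisely what drives the contradiction in the ``only if'' direction.
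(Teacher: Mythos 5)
Your proof is correct, but the ``only if'' direction travels a different road than the paper's. The paper derives it from the downward-closedness of the family $\mc F_D = \{S \subseteq D \given \kappa_S = 1\}$: any feasible $(y,z)$ for the program~\eqref{eq:dualset} defining $\kappa_S$ remains feasible for $\kappa_{S \cup \{j\}}$, so $\kappa_S \ge \kappa_{S\cup\{j\}}$, and hence an incentivizable $x$ with $\S(x) \subseteq D$ (which by Theorem~\ref{thm:main} has $\kappa_{\S(x)} = 1$) forces $\kappa_j = 1$ for every $j \in \S(x)$. You instead bypass $\kappa_S$ and Theorem~\ref{thm:main} entirely and apply Lemma~\ref{lem:subst} directly: $\kappa_j < 1$ for all $j \in D$ zeroes out every coordinate in $D$ at any optimum of any monotone mechanism, which contradicts the budget saturation forced by Definition~\ref{def:monotone}. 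Your route is more elementary and self-contained for this lemma; the paper's route has the side benefit of establishing the downward-closure of $\mc F_D$, a structural fact it announces just before the lemma and leans on in the rest of Section~\ref{sec:opt} (e.g., the enumeration argument for small $|D|$). The ``if'' directions coincide, both invoking Lemma~\ref{lem:L_j} to incentivize $B \cdot e_{j^*}$. Your closing remark about nonzero-ness being automatic from budget saturation is exactly the right reading of the statement and matches the paper's implicit interpretation.
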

\begin{proof}
  The set system $\mc F_D$ is downward closed, since $\kappa_{S \cup \{j\}} = 1$
  implies $\kappa_S = 1$ for all $S, j$. This is because any solution
  to~\eqref{eq:dualset} for $S$ is a solution to~\eqref{eq:dualset} for $S \cup
  \{j\}$, so $\kappa_{S} \ge \kappa_{S \cup \{j\}}$. Therefore, if $x$ is such
  that $\S(x) \subseteq D$ is incentivizable, meaning $\kappa_{\S(x)} = 1$, then
  $\kappa_j = 1$ for all $j \in \S(x)$. If $\kappa_j < 1$ for all $j \in D$,
  then no $x$ such that $\S(x) \subseteq D$ is incentivizable.
\end{proof}
Thus, there exists an incentivizable $x$ such that $\S(x) \subseteq D$ if and
only if there is some $j \in D$ such that the agent can be incentivized to
invest all of its budget into $x_j$.

\paragraph*{Objectives over effort profiles.}
In the remainder of this section, we prove Theorem~\ref{thm:opt}.
Lemma~\ref{lem:incent_subset} shows that if the evaluator wants the agent to
only invest effort into a subset $D$ of effort variables, one solution might be
to simply incentivize them to invest all of their effort into a single $j \in
D$. However, this might not be a satisfactory solution --- the evaluator may
want the agent to engage in a diverse set of actions, or to invest at least some
amount in each designated form of effort. Thus, the evaluator may have some
other objective beyond simply incentivizing the designated forms of effort $D$.

We formalize this as follows: suppose the evaluator has some objective $g : \R^m
\to \R$ over the agent's effort profile $x$, and wants to pick the $x$ that
maximizes $g$ subject to the constraint that $x$ is incentivizable and $\S(x)
\subseteq D$. Formally, this is
\begin{align*}
  \argmax_{x \in \R^m} ~ & g(x)  & \text{s.t.} \hspace{2ex} &\kappa_{\S(x)} = 1
  \numberthis \label{eq:eff_opt} \\
  &&& \S(x) \subseteq D
\end{align*}

To make this more tractable, we assume that $g$ is concave, as it will in
general be hard to optimize arbitrary non-concave functions. We will begin by
showing that this optimization problem is feasible when $\kappa_D = 1$, or
equivalently, when $D \in \mc F_D$. We will extend this to show that when $|D|$
is small,~\eqref{eq:eff_opt} can can be solved. In general, however, we will
show that due to the incentivizability constraint, this is computationally hard.

First, we consider the case where $\kappa_D = 1$. Here, it is possible to find a
mechanism to maximize $g(x)$ because any $x$ in the simplex $\{x \given \sum_{j
\in D} x_j = B\}$ is incentivizable by Theorem~\ref{thm:main}. Thus, the
evaluator could simply maximize $g$ over this simplex to get some effort profile
$x^*$ and find a linear mechanism $\beta$ to incentivize $x^*$. Extending this
idea, if $\kappa_D < 1$ but $|D|$ is small, the evaluator can simply enumerate
all subsets $S \subseteq D$ such that $\kappa_S = 1$, optimize $g$ over each one
separately, and pick the optimal $x^*$ out of all these candidates.

However, in general, it is NP-hard to optimize a number of natural objectives
over the set of incentivizable effort profiles if $\kappa_D < 1$. From
Theorem~\ref{thm:main}, we know that incentivizable effort profiles $x$ can be
described by their support $\S(x)$, which must satisfy $\kappa_{\S(x)} = 1$. The
following lemma shows that this constraint on $x$ makes it difficult to optimize
even simple functions because the family of sets $\mc F_D = \{S \subseteq D
\given \kappa_S = 1\}$ can be used to encode the set of independent sets of an
arbitrary graph. Using this fact, we can show that there exist concave
objectives $g$ that are NP-hard to optimize subject to the incentivizability
constraint.

\begin{lemma}
  Given a graph $G = (V, E)$, there exists an effort graph $G'$ and a set of
  designated effort nodes $D$ such that $S \subseteq D$ is an independent set of
  $G$ if and only if $\kappa_S = 1$ in $G'$.
  \label{lem:ind_set}
\end{lemma}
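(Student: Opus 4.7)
The plan is to reduce from independent set by giving a direct construction of $G'$ from $G$, with one designated effort node per vertex of $G$ and one auxiliary (non-designated) effort node per edge, such that the edge-nodes give an efficient substitution target exactly when both endpoints are in $S$. Concretely, I would build $G'$ with: (i) for each $v \in V$, a designated effort node $x_v$ and a feature $F_v$, with $\alpha_{v, F_v} = 1$ and $\alpha_{v, F_w} = 0$ for $w \ne v$; (ii) for each edge $e = \{u,v\} \in E$, a non-designated effort node $x_e$ with $\alpha_{e, F_u} = \alpha_{e, F_v} = 1$ and $\alpha_{e, F_w} = 0$ for other $w$. Take $D = \{v : v \in V\}$. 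Concavity/monotonicity of the $f_i$'s and the budget are irrelevant for this lemma since $\kappa_S$ depends only on $\alpha$, so I would leave those unspecified (e.g., $f_i(t) = t$, $B = 1$).

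For the ``only if'' direction (if $S$ is not independent, then $\kappa_S < 1$): pick any edge $e = \{u,v\} \subseteq S$ and exhibit the explicit feasible point $z_u = z_v = \tfrac{1}{2}$ (zeros elsewhere), $y_e = \tfrac{1}{2}$ (zeros elsewhere) in the LP~\eqref{eq:dualset}. A quick check shows $z^\top \ones_S = 1$ and $[\alpha^\top y]_{F_u} = [\alpha^\top y]_{F_v} = \tfrac{1}{2}$ matches $[\alpha_S^\top z]_{F_u}$ and $[\alpha_S^\top z]_{F_v}$ exactly; all other feature constraints are $0 \ge 0$. Then $y^\top \ones = \tfrac{1}{2} < 1$, so $\kappa_S \le \tfrac{1}{2} < 1$.

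For the ``if'' direction (if $S$ is independent, then $\kappa_S = 1$): take any feasible $(y, z)$ for~\eqref{eq:dualset}. For each $v \in S$, the constraint at feature $F_v$ reads $y_v + \sum_{e \ni v} y_e \ge z_v$ (only $x_v$ and edge-nodes incident to $v$ contribute to $F_v$). Summing over $v \in S$ gives
\[
\sum_{v \in S} y_v + \sum_{e \in E} \bigl|\{v \in S : v \in e\}\bigr| \, y_e \;\ge\; z^\top \ones_S \;\ge\; 1.
\]
Because $S$ is independent, $|\{v \in S : v \in e\}| \le 1$ for every edge $e$, so the left-hand side is bounded above by $\sum_{v \in S} y_v + \sum_{e \in E} y_e \le y^\top \ones$. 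Hence $y^\top \ones \ge 1$. The feasible point $y = z = e_j$ for any $j \in S$ achieves value $1$, so $\kappa_S = 1$.

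The main subtlety is choosing edge weights so that the substitution is efficient precisely when both endpoints of an edge lie in $S$ but not for any single vertex alone. Setting the edge-node weights to $1$ (matching the vertex-node weights) is exactly the knife-edge that makes singleton substitution break even while two-endpoint substitution saves half a unit of effort; higher weights would incorrectly make even singletons substitutable, and lower weights would fail to expose non-independent sets. The counting step using $|\{v \in S : v \in e\}| \le 1$ for independent $S$ is where the combinatorial content of the reduction lives, and it is the only step that requires any argument beyond direct LP construction.
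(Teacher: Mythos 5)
Your proof is correct and uses essentially the same vertex-node/edge-node gadget as the paper, differing only in the weights (you use $1$ on every arc where the paper uses $3$ on vertex arcs and $2$ on edge arcs so that the singleton substitution is strictly unprofitable rather than break-even). Your summation argument $\sum_{v\in S} y_v + \sum_e |e \cap S|\, y_e \ge z\tran \ones_S \ge 1$ correctly handles the break-even case for independent $S$, so the knife-edge weight choice is harmless and the reduction goes through.
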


\begin{proof}
  We construct a designated effort node for each $v \in V$, so $D = V$. We also
  construct an undesirable effort node for each $e \in E$, so the total number
  of effort nodes is $m = |V| + |E|$. For ease of indexing, we'll refer to the
  designated effort nodes as $x_v$ for $v \in V$ and the remaining effort nodes
  as $x_e$ for $e \in E$.

  We construct a feature $F_v$ for each vertex $v \in V$.
  Then, let $\alpha_{v, v} = 3$ for all $v \in V$ and
  $\alpha_{e,v} = 2$ for all $v \in V$. For each $e \in E$, this creates the
  gadget shown in Figure~\ref{fig:gadget}.
  \begin{figure}[ht]
    \centering
    \begin{tikzpicture}[scale=\figscale]
      \node[circle, draw] (x2) at (-1, -1) {$x_u$};
      \node[circle, draw, red] (x3) at (-1, -2) {$x_e$};
      \node[circle, draw] (x4) at (-1, -3) {$x_v$};

      \node[circle, draw] (F2) at (2, -1.25) {$F_u$};
      \node[circle, draw] (F3) at (2, -2.75) {$F_v$};

      \draw[->,very thick] (x2) edge [above] node {$3$} (F2);
      \draw[->,very thick] (x3) edge [above] node {$2$} (F2);
      \draw[->,very thick] (x3) edge [above] node {$2$} (F3);
      \draw[->,very thick] (x4) edge [below] node {$3$} (F3);

    \end{tikzpicture}
    \caption{Gadget to encode independent sets}
    \label{fig:gadget}
  \end{figure}
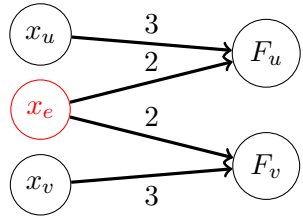

  First, we'll show that if $(u, v) \in E$, then any $S \subseteq D$ containing
  both $u$ and $v$ has $\kappa_S < 1$. Recall the definition of $\kappa_S$
  in~\eqref{eq:dualset}. Consider the solution with $z_u = z_v = \frac{1}{2}$
  and $y_e = \frac{2}{3}$. This is feasible, so $\kappa_S \le \frac{2}{3} < 1$.
  By the contrapositive, if $\kappa_S = 1$ (meaning $S$ is incentivizable), $S$
  cannot contain any $u,v$ such that $(u, v) \in E$, meaning $S$ forms an
  independent set in $G$.

  To show the other direction, consider any independent set $S$ in $G$. By
  construction, $S \subseteq D$ because $D = V$. Then, in the optimal solution
  $(y, z)$ to~\eqref{eq:dualset}, we will show that $y_u = z_u$ for all $u \in
  S$, meaning $\kappa_S = y\tran \ones \ge 1$. To do so, consider the constraint
  $[\alpha\tran y]_u \ge [\alpha_S\tran z]_u$ for any $u \in S$. This is simply
  $3 y_u + 2 \sum_{e = (u, v) \in E} y_e \ge 3z_u$. Because $S$ is an
  independent set, $z_v = 0$ for any $v$ such that $(u, v) \in E$, so this is
  the only constraint in which any such $y_e$ appears. Therefore, it is strictly
  optimal to choose $y_u = z_u$ and $y_e = 0$ for all $e = (u, v) \in E$. As a
  result, $y_u = z_u$ for all $u \in S$, meaning $\kappa_S \ge \sum_{u \in S}
  y_u = \sum_{u \in S} z_u \ge 1$ by the constraint $z\tran \ones_S \ge 1$.
\end{proof}

Thus, if the evaluator wants to find an incentivizable effort profile $x$ such
that $\S(x) \subseteq D$ (the agent only invests in designated forms of effort),
maximizing an objective like $g(x) = \|x\|_0$ (the number of non-zero effort
variables) is NP-hard, due to a reduction from the maximum independent set
problem. Note that $\|x\|_0$ is concave for nonnegative $x$.

Moreover, other simple and natural objectives are hard to optimize as well.
Using a construction similar to the one in Figure~\ref{fig:gadget}, we can
create effort graphs with a set of designated effort nodes $D$ in which $S
\subseteq D$ is incentivizable if and only if $|S| \le k$, meaning $\|x\|_0 \le
k$. This is known to make optimizing even simple quadratic functions (e.g.
$\|{\mc A}x - y\|_2$ for some matrix $\mc A$ and vector $y$)
NP-hard~\cite{natarajan1995sparse}. In general, then, it is difficult to find
the optimal agent effort profile subject to the incentivizability constraint.

\section{The Structure of the Space of Linear Mechanisms} \label{sec:lin_space}
Thus far, we have seen how to construct linear mechanisms that incentivize
particular effort profiles, finding that the mechanisms that do so form a
polytope. Suppose that the evaluator doesn't have a particular effort profile
that they want to incentivize, but instead wants the agent to only invest effort
in a subset of intended effort nodes $D \subseteq [m]$. Generalizing the
definition of $\sproof(x^*)$ as the set of linear mechanisms incentivizing
$x^*$, we define $\sproof(D)$ to be the set of linear mechanisms incentivizing
any $x$ such that $\S(x) \subseteq D$.\footnote{With this notation, we could
write $\sproof_j$ as defined in Section~\ref{sec:incent} as $\sproof(\{j\})$.}
In the remainder of this section, we give
structural results characterizing $\mc L(D)$, showing that in general it can be
highly nonconvex, indicating the richness of the solution space of this problem.

In the simplest case where $|D| = 1$, meaning the evaluator wants to incentivize
a single effort variable, we know by~\eqref{eq:ptope} that $\mc L(D)$ is simply
a polytope. This makes it possible for the evaluator to completely characterize
$\mc L(D)$ and even maximize any concave objective over it.

However, in general, $\mc L(D)$ can display nonconvexities in several ways.
Figure~\ref{fig:gadget} gives an example such that if the evaluator only wants
to incentivize $x_u$ and $x_v$, then $\mc L(D) = \{\beta \given \|\beta\|_0
= 1\}$, meaning $\beta$ has exactly one nonzero entry. This can be generalized
to an example where $\mc L(D) = \{\beta \given \|\beta\|_0 \le k\}$ for any $k$,
which amounts to a nonconvex sparsity constraint.

This form of nonconvexity arises because we're considering mechanisms that
incentivize $x$ such that $\S(x) \subseteq D$. In particular, if $S$ and $S'$
are disjoint subsets of $D$, then we wouldn't necessarily expect the union of
$\mc L(S)$ and $\mc L(S')$ to be convex. However, we might hope that if each
$\mc L(S)$ for $S \subseteq D$ is convex or can be written as the union of
convex sets, then $\mc L(D)$ could also be written as the union of convex sets.

\begin{figure}[ht]
  \centering
  \begin{tikzpicture}[scale=\figscale]
    \node[circle, draw] (x1) at (-1, 0) {$x_1$};
    \node[circle, draw, red] (x2) at (-1, -1) {$x_2$};
    \node[circle, draw] (x3) at (-1, -2) {$x_3$};
    \node[circle, draw, red] (x4) at (-1, -3) {$x_4$};

    \node[circle, draw] (F1) at (2, -.5) {$F_1$};
    \node[circle, draw] (F2) at (2, -1.5) {$F_2$};
    \node[circle, draw] (F3) at (2, -2.5) {$F_3$};

    \node[circle, draw] (H) at (5, -1.5) {$H$};

    \draw[->,very thick] (x1) edge [above] node {$1$} (F1);
    \draw[->,very thick] (x2) edge [above] node {$2$} (F2);
    \draw[->,very thick] (x3) edge [above] node {$1$} (F2);
    \draw[->,very thick] (x3) edge [below] node {$1$} (F3);
    \draw[->,very thick] (x4) edge [below] node {$2$} (F3);

    \draw[->,very thick] (F1) edge [above] node {$\beta_1$} (H);
    \draw[->,very thick] (F2) edge [above left] node {$\beta_2$} (H);
    \draw[->,very thick] (F3) edge [below] node {$\beta_3$} (H);
  \end{tikzpicture}
  \caption{Non-convexity of $\sproof^*(D)$}
  \label{fig:nonconvexS}
\end{figure}

Unfortunately, this isn't the case. Let $\mc L^*(D)$ be the set of mechanisms
incentivizing $x$ such such that $\S(x) = D$ (as opposed to $\S(x) \subseteq
D$). $\mc L^*(D)$ may still be nonconvex, depending on the particular effort
conversion functions $f(\cdot)$.
Consider
the effort graph shown in Figure~\ref{fig:nonconvexS} with $B = 1$, $f_1(y) =
f_2(y) = 1 - e^{-y}$ and $f_3(y) = 1 - e^{-2y}$. Let $D = \{1, 3\}$. To
incentivize $x_1 > 0$ and $x_3 > 0$ simultaneously with $x_2 = x_4 = 0$, it must
be the case that
\[
  \pH{x_1}{x} = \beta_1 f_1'(x_1) = \beta_2 f_2'(x_3) + \beta_3 f_3'(x_3) =
  \pH{x_3}{x}.
\]
To incentivize $x_2 = x_4 = 0$, we must also have
\begin{align*}
  \pH{x_2}{x} = 2\beta_2 f_2'(x_3) &\le \beta_2 f_2'(x_3) + \beta_3 f_3'(x_3) =
  \pH{x_3}{x} \\
  \pH{x_4}{x} =  2\beta_3 f_3'(x_3) &\le \beta_2 f_2'(x_3) + \beta_3 f_3'(x_3) =
  \pH{x_3}{x}
\end{align*}
This is only possible if $\beta_2 f_2'(x_3) = \beta_3 f_3'(x_3)$, meaning
$\beta$ incentivizes $x$ such that $\S(x) = \{1, 3\}$ if and only if
\begin{align} \label{eq:nc1}
  \beta_1 f_1'(x_1) &= \beta_2 f_2'(x_3) + \beta_3 f_3'(x_3) \\
  \label{eq:nc2}
  \beta_2 f_2'(x_3) &= \beta_3 f_3'(x_3)
\end{align}
Combining~\eqref{eq:nc1} and~\eqref{eq:nc2}, we get $\beta_1 f_1'(x_1) =
2\beta_2 f_2'(x_3)$, implying
\begin{align*}
  \beta_1 f_1'(x_1) &= 2\beta_2 f_2'(x_3) \\
  \beta_1' e^{-x_1} &= 2 \beta_2 e^{-x_3} \numberthis \label{eq:b12} \\
  \beta_2 &= \frac{\beta_1 e^{x_3-x_1}}{2} \numberthis \label{eq:b2}
\end{align*}
Similarly, we can derive
\begin{equation}
  \beta_3 = \frac{\beta_1 e^{2x_3-x_1}}{4}
  \label{eq:b3}
\end{equation}

We'll show non-convexity by giving two linear mechanisms $\beta$ and
$\beta'$ that both incentivize an $x$ such that $\S(x) = \{1, 3\}$, but $\beta''
= \frac{1}{2}(\beta + \beta')$ does not incentivize such an $x$.

Let $\beta$ and $\beta'$ incentivize $x = [\nicefrac{1}{3} ~ 0 ~ \nicefrac{2}{3}
~ 0]\tran$ and $x' = [\nicefrac{2}{3} ~ 0 ~ \nicefrac{1}{3} ~ 0]\tran$
respectively. Without loss of generality, we can set $\beta_1 = \beta_1' = 1$.
Using~\eqref{eq:b2} and~\eqref{eq:b3}, we get
\begin{align*}
  \beta &= \begin{bmatrix}
    1 & \frac{e^{1/3}}{2} & \frac{e}{4}
  \end{bmatrix}\tran
  \hspace{1in}
  \beta' = \begin{bmatrix}
    1 & \frac{e^{-1/3}}{2} & \frac{1}{4}
  \end{bmatrix}\tran
\end{align*}
Then, let $\beta'' = \frac{1}{2}(\beta + \beta')$. If $\beta''$ incentivizes
$x^*$ such that $\S(x^*) = \{1, 3\}$, then by~\eqref{eq:nc2}, it must be the case that
\begin{align*}
  \beta_2'' f_2'(x^*_3) &= \beta_3'' f_3'(x^*_3) \\
  \beta_2'' e^{-x^*_3} &= 2\beta_3'' e^{-2x^*_3} \\
  e^{x^*_3} &= \frac{2 \beta_3''}{\beta_2''} \\
  x^*_3 &= \log\p{\frac{e+1}{e^{1/3} + e^{-1/3}}} \approx 0.566
\end{align*}
On the other hand, by~\eqref{eq:b12}, we must also have
\begin{align*}
  \beta_1'' f_1'(x^*_1) &= 2 \beta_2'' e^{-x^*_3} \\
  e^{-x^*_1} &= \frac{e^{1/3} + e^{-1/3}}{2} \exp\p{-\log\p{\frac{e+1}{e^{1/3}
  + e^{-1/3}}}} \\
  e^{-x^*_1} &= \frac{e^{1/3} + e^{-1/3}}{2} \cdot \frac{e^{1/3} +
  e^{-1/3}}{e+1} \\
  x^*_1 &= -\log\p{\frac{(e^{1/3} + e^{-1/3})^2}{2(e+1)}} \approx 0.511
\end{align*}
Such a solution would fail to respect the budget constraint (since $x^*_1
+ x^*_3 > 1 = B$), meaning $\beta''$ cannot incentivize $x^*$ such that $\S(x^*)
= \{1, 3\}$. In fact, the above analysis shows that for any $x^*$ incentivized
by $\beta''$, $\S(x^*)$ must include either $2$ or $4$ because $\beta''$
incentivizes neither $x^*_1 = 1$ nor $x^*_3 = 1$, meaning the only way to use
the entire budget is to set $x^*_2 > 0$ or $x^*_4 > 0$. Thus, despite the fact
that both $\beta$ and $\beta'$ incentivize effort profiles with support $\{1,
3\}$, a convex combination of them does not. As a result, the set of linear
mechanisms incentivizing a subset of effort variables may in general exhibit
complex structures that don't lend themselves to simple characterization.

We visualize this nonconvexity in Figure~\ref{fig:noncon}, where for clarity we
modify the effort graph in Figure~\ref{fig:nonconvexS} by setting $\alpha_{22} =
0$. The yellow region corresponds to $(\beta_2, \beta_3)$ values such that
$\beta = (1 ~ \beta_2 ~ \beta_3)^{\top}$ incentivizes $x$ such that $\S(x) =
\{1, 3\}$. Note that the upper left edge of this region is slightly curved,
producing the non-convexity. As a result, the set of linear mechanisms
incentivizing a subset of effort variables may in general exhibit complex
structures that don't lend themselves to simple characterization.

\begin{figure}[ht]
  \centering
  \includegraphics[width=10cm]{./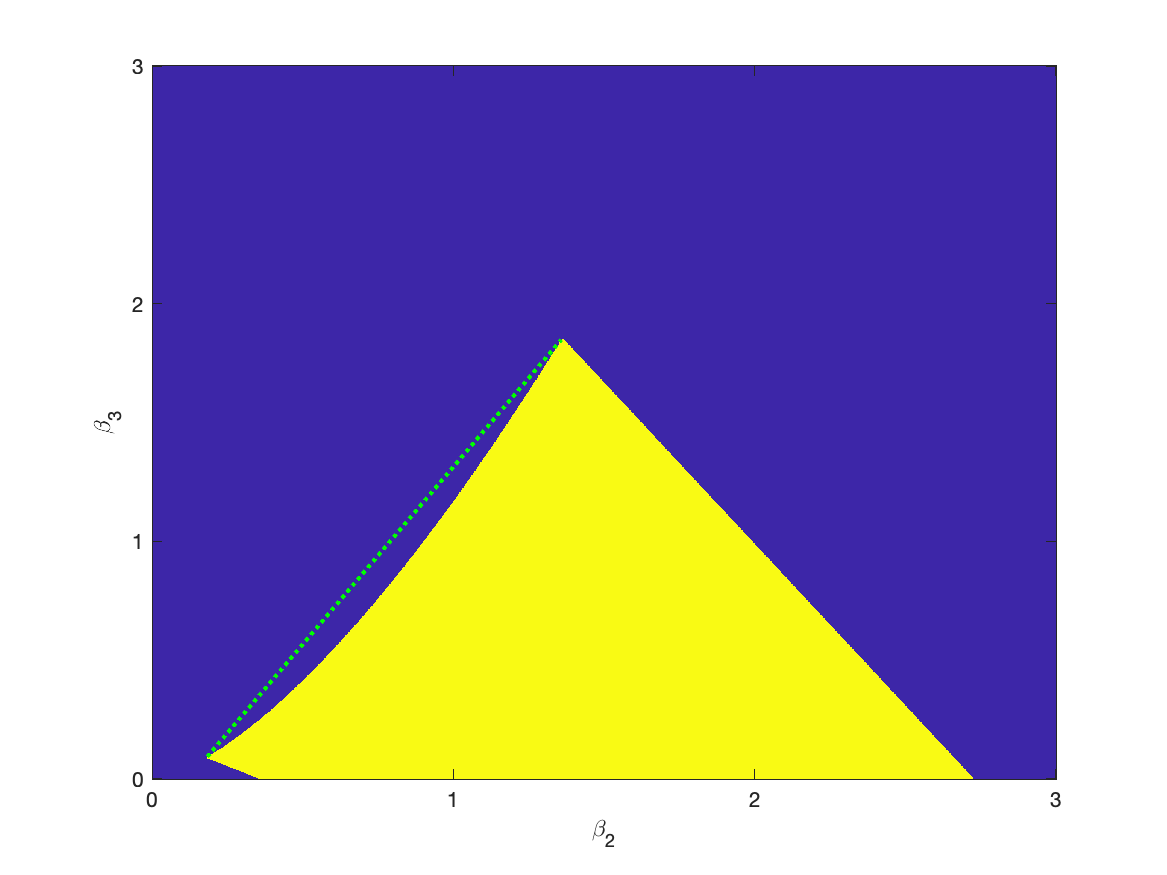}
  \caption{Non-convexity in $(\beta_2, \beta_3)$ pairs}
  \label{fig:noncon}
\end{figure}

\paragraph*{Implications for optimization.}
The complexity of $\mc L(D)$ has immediate hardness implications for
optimizing objectives over the space of linear mechanisms.
For
example, mechanisms that distribute weight on multiple features may be
preferable because in practice, measuring multiple distinct features may lead to
less noisy evaluations.
We might also
consider the case where the evaluator has
historical data $\mc A \in \R^{r \times n}$ and $y \in \R^r$, where each
row of $\mc A$ contains the features $F$ of some individual and each entry of $y$
contains their measured outcome of some sort. Then, in the absence of strategic
considerations, the evaluator could just choose $\beta$ that minimizes squared
error $\|\mc A \beta - y\|_2$ between the scores given by the mechanism and
the outcomes $y$ in the dataset. As noted above, there are examples for which
$\mc L(D) = \{\beta \given \|\beta\|_0 \le k\}$, which is known to make
minimizing squared error NP-hard~\cite{natarajan1995sparse}. However,
in the special case when $D = \{j\}$ (there's only one effort node the evaluator
wants to incentivize), then if $\kappa_j = 1$, the set of linear mechanisms
incentivizing $x^* = B \cdot e_j$ is just the convex polytope $\mc L_j$ defined
in~\eqref{eq:ptope}. Thus, it is possible to maximize any concave objective over
this set.

\section{Conclusion}
Strategic behavior is a major challenge in designing simple and transparent
evaluation mechanisms. In this work, we have developed a model in which
strategic behavior can be directed toward specified forms of effort
through appropriate designs.

Our results leave open a number of interesting questions. All of our analysis
has been for the case in which an evaluator designs a mechanism optimized for
the parameters of a single agent (or for a group of agents who all have
the same parameters). Extending this reasoning to consider the
incentives of a heterogeneous group of agents, where the parameters
differ across members of the group, is a natural further direction. 
In addition, we have assumed throughout that agents behave rationally,
in that they perfectly optimize their allocation of effort.
But it would also be interesting to consider agents with
potential biases that reflect human behavioral principles,
resulting in sub-optimal behavior that follows certain structured properties.
Finally, although we have shown that linear
mechanisms suffice whenever a monotone mechanism can incentivize intended
behavior, if the output of the mechanisms is constrained in some way (e.g.
binary classification), it is an open question to determine
what types of mechanisms are appropriate.

\paragraph*{Acknowledgments.}
Acknowledgments. We thank Rediet Abebe, Solon Barocas, Larry Blume, Fernando
Delgado, Karen Levy, and Helen Nissenbaum for their useful feedback and
suggestions. Thanks to Tal Alon, Magdalen Dobson, and Jamie Tucker-Foltz for
fixing an error in Lemma~\ref{lem:ind_set} that appeared in an earlier version
of this work. This work has been supported in part by a Simons Investigator
Award, a grant from the MacArthur Foundation, graduate fellowships from the
National Science Foundation and Microsoft, and NSF grants CCF1740822 and
SES-1741441.

\bibliographystyle{plain}
\bibliography{refs}

\appendix
\section{Characterizing the Agent's Response to a Linear Mechanism.}
\label{app:agent_response}
In this section, we'll characterize how a rational agent best-responds to a
linear mechanism. Its utility is $H = \beta\tran F$, and therefore we can
rewrite the optimization problem~\eqref{eq:gen_invest} with $M(F) = \beta\tran
F$, which yields
\begin{align*}
  \max_{x \in \R^m} ~& \sum_{i=1}^n \beta_i f_i([\alpha\tran x]_i) \numberthis
  \label{eq:lin_opt} \\
  \text{s.t.} ~& x \ge \zeros \\
  ~& \sum_{j=1}^m x_j \le B
\end{align*}
Note that this is a concave maximization since each $f_i$ is weakly concave and
$[\alpha\tran x]_i$ is linear in $x$. The Lagrangian is
then
\[
  \sproof(x, \vl) = \sum_{i=1}^n \beta_i f_i\p{[\alpha\tran x]_i} +
  \lambda_0\p{B - \sum_{j=1}^m x_j} + \sum_{j=1}^m \lambda_j x_j.
\]
By the Karush-Kuhn-Tucker conditions, since~\eqref{eq:lin_opt} is convex, a
solution $x^*$ is optimal if and only if $\nabla_x \sproof(x^*, \vl^*) =
\mathbf{0}$, so for each $j \in [m]$,
\[
  \sum_{i=1}^n \alpha_{ji} \beta_i f_i'\p{[\alpha\tran x^*]_i} - \lambda_0^* +
  \lambda_j^* = 0.
\]
Note that we can write this as
\[
  \lambda_0^* = \left.\frac{\partial H}{\partial x_j}\right|_{x^*} +
    \lambda_j^*.
\]
By complementary slackness, $\lambda_j^* > 0 \Longrightarrow x_j^* = 0$.
Therefore, it follows that at optimality, the gradients with respect to all
nonzero effort components are $\lambda_0^*$. Furthermore, the gradients with
respect to all effort components are at most $\lambda_0^*$ since $\lambda_j^*
\ge 0$ by definition. This proves the following lemma.
\begin{lemma}
  For any $x \in \R^m$ such that $x \ge \zeros$, $x$
  is an optimal solution to~\eqref{eq:lin_opt} if and only if the following
  conditions hold
  \begin{enumerate}
    \item $\sum_{j=1}^m x_j = B$
    \item For all $j, j'$ such that $x_j > 0$ and $x_{j'} > 0$,
      \[
        \left.\frac{\partial H}{\partial x_j}\right|_x = \left.\frac{\partial
        H}{\partial x_{j'}}\right|_x
      \]
    \item For all $j$ such that $x_j > 0$ and for all $j'$,
      \[
        \left.\frac{\partial H}{\partial x_j}\right|_x \ge \left.\frac{\partial
        H}{\partial x_{j'}}\right|_x
      \]
  \end{enumerate}
  \label{lem:kkt}
\end{lemma}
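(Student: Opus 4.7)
The problem~\eqref{eq:lin_opt} is a concave maximization over a polyhedral feasible set: each $f_i$ is weakly concave and nondecreasing, the composition $f_i([\alpha\tran x]_i)$ is concave in $x$ since the inner map is linear, a nonnegative combination of concave functions is concave, and the constraints are linear. Hence Slater's condition holds (e.g.\ $x = 0$ is strictly feasible if $B > 0$) and the Karush--Kuhn--Tucker conditions are both necessary and sufficient for optimality. My plan is to read off the three conditions directly from KKT stationarity and complementary slackness, and then close the loop by constructing multipliers from any $x$ satisfying the three conditions.

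For the forward direction (optimal $\Rightarrow$ three conditions), I would start from the Lagrangian $\sproof(x, \vl) = H(x) + \lambda_0(B - \sum_j x_j) + \sum_j \lambda_j x_j$ displayed in the excerpt, and use the stationarity equation $\pH{x_j}{x^*} = \lambda_0^* - \lambda_j^*$ together with complementary slackness ($\lambda_j^* x_j^* = 0$ and $\lambda_0^*(B - \sum_j x_j^*) = 0$) and dual feasibility ($\lambda_j^* \ge 0$). Condition~2 follows because $x_j^* > 0$ forces $\lambda_j^* = 0$, pinning $\pH{x_j}{x^*} = \lambda_0^*$, so all partials at coordinates of the support agree. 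Condition~3 follows because for any $j'$, $\pH{x_{j'}}{x^*} = \lambda_0^* - \lambda_{j'}^* \le \lambda_0^*$. Condition~1 requires showing $\lambda_0^* > 0$ so that the budget is tight: this uses that $\beta \ne \zeros$ with $\beta \ge \zeros$, $f_i$ is strictly increasing, and $\alpha \ge \zeros$, so $\pH{x_j}{x}$ is strictly positive for at least one $j$ whenever effort on that coordinate increases a relevant feature, and therefore $\lambda_0^*$ must be strictly positive at optimality.

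For the converse, given any $x \ge \zeros$ satisfying the three conditions, I would construct dual multipliers explicitly. Let $\lambda_0 \triangleq \pH{x_j}{x}$ for any $j$ in the support (well-defined by condition 2, and the support is nonempty by condition 1 when $B > 0$). For every $j'$, set $\lambda_{j'} \triangleq \lambda_0 - \pH{x_{j'}}{x}$, which is nonnegative by condition 3 and equals zero when $x_{j'} > 0$ by condition 2. Then stationarity, primal feasibility, dual feasibility, and both complementary slackness conditions are satisfied by inspection, so KKT sufficiency for convex programs delivers optimality of $x$.

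The main subtlety, rather than any deep obstacle, is the argument for budget tightness: I need to verify that the monotone/linear mechanism actually produces a strictly positive marginal in at least one coordinate so that $\lambda_0^* > 0$. This is where the conditions $\beta \ge \zeros$, $\sum_i \beta_i > 0$, strict monotonicity of each $f_i$, and nonnegativity of $\alpha$ come together; once this is established, the rest is essentially bookkeeping of the KKT system.
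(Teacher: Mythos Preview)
Your proposal is correct and follows essentially the same KKT-based argument as the paper: derive conditions~2 and~3 from stationarity plus complementary slackness, obtain condition~1 by arguing $\lambda_0^* > 0$ via strict monotonicity of the $f_i$ and $\beta \ne \zeros$, and prove the converse by constructing the multipliers $\lambda_0 = \pH{x_j}{x}$ for $j \in \S(x)$ and $\lambda_{j'} = \lambda_0 - \pH{x_{j'}}{x}$. The paper splits this across the preamble to the lemma and the proof itself, but the content is the same.
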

\begin{proof}
  Choose $\lambda_0^* = \left.\frac{\partial H}{\partial x_j}\right|_x$ for any
  $j$ such that $x_j > 0$. Choose $\lambda_j^* = \lambda_0^* -
  \left.\frac{\partial H}{\partial x_j}\right|_x$ for all $j$. Then, $(x,
  \lambda^*)$ satisfies stationarity (since $\nabla_x \sproof(x, \lambda^*) =
  \zeros)$, primal and dual feasibility by definition, and complementary
  slackness (since $B - \sum_{j=1}^m x_j = 0$). Therefore, $x$ is an optimal
  solution to~\eqref{eq:lin_opt}.

  To show the other direction, note that $\max_j \frac{\partial H}{\partial x_j}
  > 0$ because each $f_i(\cdot)$ is strictly increasing and there is some
  nonzero $\beta_i$. Therefore, $\lambda_0 > 0$, and by complementary slackness,
  every optimal solution must satisfy $\sum_{j=1}^m x_j = B$.
\end{proof}

\end{document}